\newcounter{Lcount}
\newcommand{\numsquishlist}{
   \begin{list}{\arabic{Lcount}. }
    { \usecounter{Lcount}
 \setlength{\itemsep}{-.1ex}      \setlength{\parsep}{0ex}
      \setlength{\topsep}{0ex}       \setlength{\partopsep}{0ex}
      \setlength{\leftmargin}{1em} \setlength{\labelwidth}{1em}
      \setlength{\labelsep}{0.1em} } }
\newcommand{\numsquishend}{\end{list}}
\newcommand{\squishlist}{
   \begin{list}{$\bullet$}
    { \setlength{\itemsep}{-.1ex}      \setlength{\parsep}{0ex}
      \setlength{\topsep}{0ex}       \setlength{\partopsep}{0ex}
      \setlength{\leftmargin}{.8em} \setlength{\labelwidth}{1em}
      \setlength{\labelsep}{0.5em} } }
\newcommand{\squishend}{\end{list}}
\newcommand{\cmip}{{\sc Coordination Strategy Inference Problem}\xspace}%
\DeclareMathOperator*{\argmin}{\mathop{\mathrm{argmin}}\limits}
\newenvironment{problem}[1][htb]
  {
   \begin{algorithm2e}[#1]%
  }{\end{algorithm2e}}
\begin{document}
\title{Framework for Inferring Following Strategies from Time Series of Movement Data}

  
\author{Chainarong Amornbunchornvej}
\orcid{0000-0003-3131-0370}
\affiliation{\institution{National Electronics and Computer Technology Center}
\city{Pathum Thani}
  \country{Thailand}}
\email{chainarong.amo@nectec.or.th}

\author{Tanya Berger-Wolf}
\affiliation{\institution{University of Illinois at Chicago}
\city{Chicago}\state{IL}
\country{USA}}
\email{tanyabw@uic.edu}

\renewcommand{\shortauthors}{C. Amornbunchornvej et al.}

\begin{abstract}
How do groups of individuals achieve consensus in movement decisions? Do individuals follow their friends, the one predetermined leader, or whomever just happens to be nearby? To address these questions computationally, we formalize \cmip. In this setting, a group of multiple individuals moves in a coordinated manner towards a target path.  Each individual uses a specific strategy to follow others (e.g. nearest neighbors, pre-defined leaders, preferred friends).  Given a set of time series that includes coordinated movement and a set of candidate strategies as inputs, we provide the first methodology (to the best of our knowledge) to infer whether each individual uses local-agreement-system or dictatorship-like strategy to achieve movement coordination at the group level.  We evaluate and demonstrate the performance of the proposed framework by predicting  the direction of movement of an individual in a group in both simulated datasets as well as two real-world datasets: a school of fish and a troop of baboons.  Moreover, since there is no prior methodology for inferring individual-level strategies, we compare our framework with the state-of-the-art approach for the task of classification of group-level-coordination models. The results show that our approach is highly accurate in inferring the correct strategy in simulated datasets even in complicated mixed strategy settings, which no existing method can infer. In the task of classification of group-level-coordination models, our framework performs better than the state-of-the-art approach in all datasets. Animal data experiments show that fish, as expected, follow their neighbors, while baboons have a preference to follow specific individuals.  Our methodology generalizes to arbitrary time series data of real numbers, beyond movement data.
\end{abstract}

%
%
\begin{CCSXML}
<ccs2012>
<concept>
<concept_id>10002951.10003227.10003236</concept_id>
<concept_desc>Information systems~Spatial-temporal systems</concept_desc>
<concept_significance>500</concept_significance>
</concept>
<concept>
<concept_id>10002951.10003227.10003351</concept_id>
<concept_desc>Information systems~Data mining</concept_desc>
<concept_significance>500</concept_significance>
</concept>
<concept>
<concept_id>10010147.10010178.10010219.10010223</concept_id>
<concept_desc>Computing methodologies~Cooperation and coordination</concept_desc>
<concept_significance>300</concept_significance>
</concept>
</ccs2012>
\end{CCSXML}

\ccsdesc[500]{Information systems~Spatial-temporal systems}
\ccsdesc[500]{Information systems~Data mining}
\ccsdesc[300]{Computing methodologies~Cooperation and coordination}

\keywords{Model Selection, Coordination, Time Series, Leadership}

\maketitle

\section{Introduction}

\begin{figure*}[ht!]
\centering
\includegraphics[width=0.8\columnwidth]{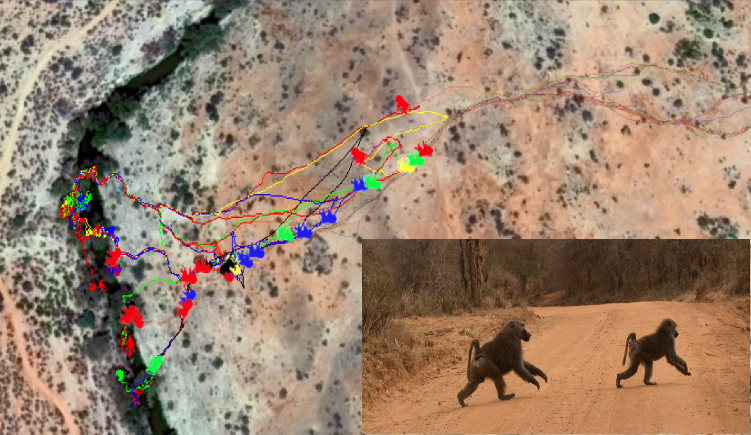}
\caption{An example of GPS-collar trajectories of Olive baboons living in Mpala Research Centre, Kenya~\cite{crofoot2015data,strandburg2015shared}. In this event, the troop is forming coordinated movement.}
\label{fig:Baboon}
\end{figure*}

Coordination is a form of group behavior aimed to make the group achieve a collective goal~\cite{malone1994interdisciplinary}. During the decision-making process, a collective goal is to reach a group's consensus, which is defined as the state when all individuals share a common agreement~\cite{CaoMultiAgent:2013}. One of the mechanisms by which a group can achieve a collective goal is leadership, which is a process of pattern initiation by specific individuals, leaders, then followed by the rest~\cite{FLICAtkdd}. In behavioral studies, coordination problems, such as group decision making, coordinated movement, group hunting, social conflicts, and territorial defense, can be solved by leadership~\cite{Dyer:2009aa,krause2000leadership}. Typically, leaders might not be explicit or global to a group, yet the group can still create coordinated movement via a local strategy (e.g. individuals follow their neighbors)~\cite{Dyer:2009aa}. Moreover, many groups of individuals in Nature have neither leaders nor central authority, but these groups are capable of forming coordination patterns~\cite{valentini2019robots,ray2019information,Hrncir2019}, such as honey bees~\cite{Hrncir2019}, slime molds~\cite{ray2019information}, etc.

\begin{figure}[ht!]
\centering
\includegraphics[width=0.9\columnwidth]{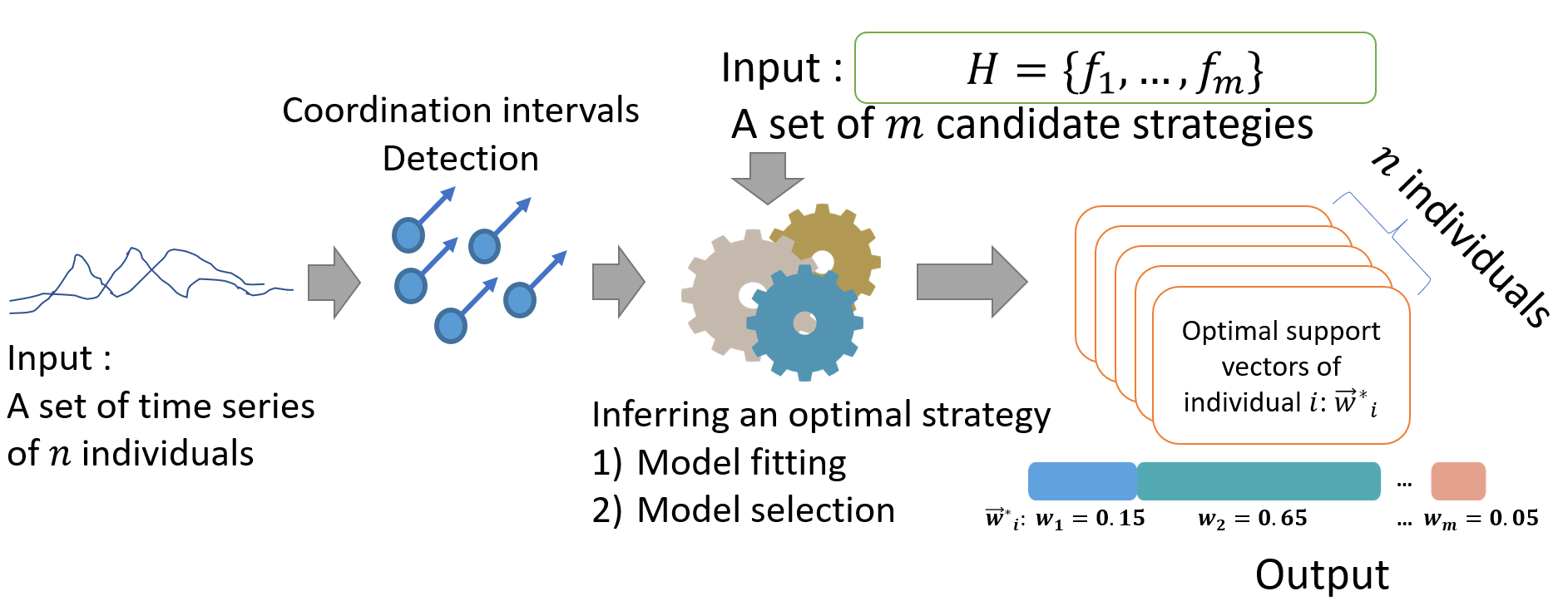}
\caption{An overview of the proposed framework. Given a set of time series as inputs, 1), the framework detects coordination intervals, 2) infers the optimal strategy from a set of candidates that optimally fit the training data, and 3) reports the optimal strategy for each individual from validation data. 
} 

\label{fig:FrameworkOverview}
\end{figure}

In cooperative control of multi-agent systems, the field focuses on how to design a local strategy for each agent so that the group can achieve collective goals~\cite{lewis2013cooperative,CaoMultiAgent:2013,valentini2019robots}. Many systems have been designed by inspiration of natural collective behaviors such as a flock of birds, a school of fish, etc.~\cite{valentini2019robots}. Recently, patterns of opinion formation that emerge from dynamic behaviors of social networks are studied in the view of multi-agent systems~\cite{PROSKURNIKOV2018166,Anderson2019}.

Agents can communicate only with  their neighbors via a communication network, which is defined by any neighborhood concept in some space~\cite{lewis2013cooperative}. There is a large body of work in multi-agent systems that proposes local synchronization strategies~\cite{lewis2013cooperative,CaoMultiAgent:2013,etesami2019simple}. In behavioral studies, the work in ~\cite{Dyer:2009aa,strandburg2013visual} tried to model the coordination process via a concept of information spreading.  A small number of informed agents can spread information through a large number of uninformed agents, which results in the group's consensus and coordinated movement.    The work by Chazelle~\cite{doi:10.1137/100791671} introduced a model, namely a reversible agreement system, that guarantees convergence of the group state, with or without leaders. In more complicated settings, the works in~\cite{amornbunchornvej2018framework,etesami2019simple} provided the analysis of multiagent network systems that can form coordination where networks of relations of agent interactions can change over time. In online social networks, there is also a ``Diffusion Model''~\cite{kempe2003maximizing,goyal2010learning,He2016maximizing} that models an information spreading process among individuals that results in the entire network reaching a common state. 

However, in this paper, we focus on the inverse question of {\em inferring} the local strategies collective individuals use to achieve a state of coordination. There are only a few studies that address this question. The works by Farine {\em et al.}~\cite{farine2016both} found that wild baboons can achieve the state of coordinated movement within a group by following their neighbors or long-term associates, depending on the time scale of the coordination process. 

There are several studies that look at the collective behavior of fish. For example, the work in ~\cite{10.1371/journal.pcbi.1002678} modeled and inferred the rules of movement coordination of fish, which is affected by the group size; Herbert-Read {\em et al.}~\cite{Herbert-Read18726} reported that the rules of movement coordination of fish mainly depend on attraction forces of the group; and Katz {\em et al.}~\cite{Katz18720} showed that fish tend to imitate the direction of neighbors ahead. 
The work in~\cite{10.1371/journal.pcbi.1002961,MEE3:MEE312155} proposed model selection methods to infer the animal-behavior model, but they cannot be used to find models that guarantee coordination.  

\subsection{The current state of the art approach}
The work in ~\cite{FLICAtkdd} provided a framework, FLICA, for leadership inference and model classification in time series data. FLICA considers the shape of time series to infer pairwise relationship who follows whom (instead of considering only directions or positions of individuals). Hence, FLICA subsumes all previous methods~\cite{FLICAtkdd} including FLOCK patterns leadership~\cite{andersson2008reporting}, time-lag following leadership~\cite{kjargaard2013time}, etc. FLICA can infer an underlying possible group model that generated coordination via a classification method. However, FLICA cannot be used to infer individual-level strategies that collectively combine to coordinated movement at the group level. In fact, each individual within a group can use a {\em different} strategy to achieve collective coordination (Proposition~\ref{prop:mixstr}). Hence, in this paper, we develop an approach to fill this methodological gap. Note that we use the words `model', `mechanism', and `strategy' interchangeably. 

\subsection{Our Contributions}
In order to fill the gap in the literature, we formalize \cmip, analyze theoretical properties of a strategy that guarantees coordination, propose hierarchical and non-hierarchical strategies that guarantee coordination, as well as propose a computational framework to infer, from time-series data, individual-level coordination strategies. Given a set of candidate strategies and time series of coordinated movement, our framework is capable of:
\squishlist
\item {\bf Inferring the latent strategies:} inferring the best fit set of mixed or pure strategy for agents that provide the lowest loss value for the task of predicting the direction of movement; and
\item {\bf Movement prediction:} predicting the direction of the next move of each agent when the optimal strategy is unknown, using the set of the inferred latent strategies.
\squishend

We evaluate and demonstrate the  performance of our framework on simulated datasets as well as real-world datasets of animal movement. On simulated data, the task is to infer the correct latent coordination model that was used to generate the simulated time series of coordinated movement.  We use the baboon dataset to predict the next movement to find which strategies each baboon likely used to coordinate its movement. Lastly, in fish datasets, we show how to apply the framework to do the model selection to address a hypothesis about the original model that the fish use to achieve coordinated movement.

\begin{framed}  
\noindent {\cmip:} { To reach a group consensus, individuals have to coordinate with others. There are many strategies each individual can use to achieve coordination at the group level. {\bf Given time series of individual activities and a set of candidate strategies, the goal is to find the set of original strategies individuals used that lead to the group consensus. }}
\end{framed}

\subsection{Flock modeling, dictatorship, and our model-selection framework}
Do agents use some flock models or they use a dictatorship model when they move? In this paper, given a set of candidate models and time series of agents' states (e.g. directions or positions of agents that move in a metric space), the main focus is to develop a model-selection framework for inferring which model(s) are generator of these time series. We focus on two classes of models that the group can reach coordination: neighbors' dependent model and individual-dependent model. For simplicity, the following example is defined the states of agents as directions in movement context.

In neighbors' dependent models, agents move following their group w.r.t. their neighbor directions and positions. This type of model is flock modeling that has a rich literature. The flock models began with the work by CW Reynolds in 1980s~\cite{reynolds1987flocks}. 
The works in~\cite{couzin2002collective,reynolds1987flocks,lopez2012behavioural} proposed flock decentralized models that need no leaders but still be able to  self-organize and maintain coordination.  The works in~\cite{brown2014human,Kerman2012,brown2016two} proposed flock models that humans can control and change a state of group behaviors. The work in~\cite{su2008flocking} proposed a flock multi-agent system with several leaders and showed that the group will converge toward average of leaders' states.
In term of convex hull analysis, for each time step, an agent in neighbors' dependent models changes its state within a convex hull of its neighbors' states except some individuals who lead the flock. There are many state-of-the-art models in flock literature that have rules to make agents avoid collision and other problem. For example , the recent flock model~\cite{qiu2020multi} has been developed for the Unmanned aerial vehicle (UAV) flocking control purpose so that a group can effectively avoid obstacles during a flight. 

Since a state (e.g. direction, velocity, etc.) of movement of each agent in flock models rarely leave a state convex hull of its neighbors, mathematically, according to the works in~\cite{doi:10.1137/100791671,Chazelle2019}, almost all flock models can be viewed as agreement systems studied by Chazelle~\cite{doi:10.1137/100791671} and the recent Averaging system~\cite{Chazelle2019}. Hence, based on Chazelle's works, we propose to use Local Reversible Agreement system (LRA), which is a variation of Chazelle's averaging system, as one of input models of our model-selection framework.

In individual-dependent models, agents move following some specific individuals without any dependency with directions or positions of their neighbors. The obvious case is a dictatorship model where everyone follows leader agents~\cite{FLICAtkdd,amornbunchornvej2018framework,goyal2008discovering}. Influence Maximization models (e.g. linear threshold, independent cascade models)~\cite{He2016maximizing,kempe2003maximizing} are other models that some individuals (influencers) influence other individuals. For these models, the common property is that agents follow some individuals (typically leaders) directly without considering environmental factors (e.g. directions, positions, or velocities of neighbors). In this work, we propose to use a hierarchical model (HM) as a representative model of individual-dependent models to be an input of our model-selection framework. 

Nevertheless, almost all models proposed in the literature assume that all agents are under the same rules when they have to interact with others. In nature, however, different individuals might use different strategies to follow the group but the group still be able to reach coordination. In this work, we propose a framework that can distinguish whether each agent follows its neighbors (LRA), specific individuals (HM), or itself (AR) from time series data. We also show that even though different individuals within the same group use either LRA or HM, the group still be able to reach coordination (see Section~\ref{sec:modelandprop}).

\newcommand{\argmax}{\mathop{\mathrm{argmax}}\limits}

\section{Preliminaries and Definitions}

We use the following notation throughout the paper:

\squishlist
\item $\mathcal{N} =\{1,\dots, n\}$ is a set of agents.
\item $\mathcal{I} \subseteq \mathcal{N}$ is a set of informed agents.
\item $S^t_i$ is a state value of agent $i$ at time $t$, where $S^t_i\in\mathbb{R}^d$. 
\item $S^t = \{S^t_i\}$ is a set of individual states at at time $t$.
\item $S_i = (S^0_i,\dots,S^T_i)$ is a state time series of agent $i$ where $T$ is a length of time series.
\item $S_w = (S^0_w,\dots,S^T_w)$ is a target path where $S^t_w \in \mathbb{R}^d$ is a target state at time $t$.
\item $\mathcal{H}= \{h_i\}$ is a set of strategy functions that agents use to update their current state where $h_i:\mathbb{R}^d  \to \mathbb{R}^d$.
\item $\mathcal{S}=\{S_i\}$ is a set of state time series generated by agents using some set of strategy functions $\mathcal{F}\subseteq\mathcal{H}$.
\item $\sigma \in [0,1]$ is a noise-tolerance threshold.
\squishend

Given a set of $n$ agents $\mathcal{N}$ with a set of their initial states $S^0 = \{S^0_i\}$, these $n$ agents generate a set of state time series $\mathcal{S}=\{S_i\}$, where $S_i= (S^0_i,\dots,S^T_i)$ is the state time series of agent $i\in \mathcal{N}$. For each time step $t$, each agent $i$ updates its state via a strategy function $h_i\in \mathcal{H}$: $S^{t}_i=h_i(S^{t-1}_i)$. However, an informed agent $j\in \mathcal{I}$ always has its state the same as a target path $S_w$: $S^t_j=S^t_w$. 
 
\subsection{Initiator of coordination}
 We use the definitions of coordination, following relation, and coordination initiator from~\cite{FLICAtkdd}.
 Let $\mathcal{S}=\{S_i\}$ be a set of time series. Let $S_{i,t_c}$ denote the time series equal to $S_i$  that starts at time $t_c$, that is $\forall t\in \mathbb{Z}, \: S_{i,t_c}^{t+t_c}=S^t_i$, and $\mathrm{sim}: \mathcal{S}\times \mathcal{S} \to [0,1]$ be any similarity function over time series. We then define the similarity function of a {\em following relation} between two time series (similarity with a time shift):

 \begin{equation}
\mathrm{sim}_{foll}(S_i,S_j) =\max_{\Delta t \in \mathbb{Z}} \mathrm{sim}(S_{i,0},S_{j,0+\Delta t}).
 \label{eq:FollSimFunc}
 \end{equation}

We can also define the minimum time delay of a following relation below:  In Eq.~\ref{eq:FollDelayFunc}, if there are multiple time delays that have the same $\max_{\Delta t \in \mathbb{Z}} \mathrm{sim}(S_{i,0},S_{j,0+\Delta t})$ (similar patterns repeated many times), then we choose the minimum value of these time delays to represent the time delay between two time series that share similar patterns. For example, if a pattern is repeated periodically, Eq~\ref{eq:FollDelayFunc} will ensure that the first iteration will be chosen.

 \begin{equation}
\Delta t_{foll}(S_i,S_j) =\min[\argmax_{\Delta t \in \mathbb{Z}} \mathrm{sim}(S_{i,0},S_{j,0+\Delta t})].
 \label{eq:FollDelayFunc}
 \end{equation}

\begin{definition}[$\sigma$-Following relation]
Let $P=(P^0,\dots)$ and $Q=(Q^0,\dots)$ be time series. If $\mathrm{sim}_{foll}(P,Q) \geq \sigma$ and the time delay $\Delta t_{foll}(P,Q) \geq 0$, then $P$ is followed by $Q$, denoted by $P\preceq Q$. In the case that $\Delta t_{foll}(P,Q) > 0$, then $P$ is strictly followed by $Q$, denoted by $P\prec Q$.
\label{def:follr}
\end{definition}
That is, $Q$ follows $P$ if $Q$ is sufficiently similar to $P$, with a time delay. The $\sigma$ threshold is used to defined the sufficient level of similarity that we accepted that $Q$ follows $P$.

\begin{definition}[Coordination interval]
Let $\mathcal{Q}=\{Q_1,\dots,Q_n\}$ be a set of time series. For any interval $[t_1,t_2]$ if $\forall t \in [t_1,t_2], \:\forall Q_i,Q_j \in \mathcal{Q}$ s.t. $i \neq j$, either $Q_i \preceq Q_j$ or $Q_j \preceq Q_i$, then $[t_1,t_2]$ is a coordination interval.
\end{definition}
That is, a coordination interval is the time when everybody either follows or is followed by somebody.

\begin{definition}[Initiator]
Let $\mathcal{Q}=\{Q_1,\dots,Q_n\}$ be a set of time series and $[t_1,t_2]$ be a coordination interval  of $\mathcal{Q}$. For any $Q_L \in \mathcal{Q}$, if $\forall t \in [t_1,t_2], \:\forall Q_i \in \mathcal{Q} \setminus \{Q_L\}$,  $Q_L \prec Q_i$, then $L$ is an initiator of coordination interval $[t_1,t_2]$.
\end{definition}
The initiator is the one who is followed by everybody during coordination.

\begin{definition}[Coordination event]
Let $\mathcal{Q}=\{Q_1,\dots,Q_n\}$ be a set of time series. If there exists any coordination interval in $\mathcal{Q}$, then $\mathcal{Q}$ is a coordination event.
\end{definition}

\begin{definition}[Coordination strategy]
Let $\mathcal{F} \subseteq \mathcal{H}$ be a set of strategy functions that the agents use to generate a set of state time series $\mathcal{S}=\{S_i\}$.
Each agent $i\in \mathcal{N}$ uses a function $f_i\in \mathcal{F}$ to update its state for each time step. $\mathcal{F}$ is a set of coordination strategies of  $\mathcal{S}$ if $\mathcal{S}$ is a coordination event.
\end{definition}
 
Note that if all agents follow the target path $S_w$, then an informed agent is an initiator of coordination.

\subsection{Problem formalization}

Suppose there is a set of state time series $\mathcal{S}=\{S_i\}$ that was generated by an unknown set of latent coordination strategies  $\mathcal{F} \subseteq \mathcal{H}$ w.r.t. some unknown $\sigma$. The only available inputs are $\mathcal{S}$ and the entire set $\mathcal{H}$. The goal is to find $\mathcal{F}$. The real identity of the target path $S_w$ is unknown, but it is known that $S_w \in \mathcal{S}$. Before formalizing the problem, we define the risk function to measure the fitness of any $h_k \in \mathcal{H}$ that might be in $\mathcal{F}$, for any agent $i$:  

\begin{equation}
risk(S_i,h_k)=\frac{1}{T}\sum^T_{t=1} loss(S^t_i, h_k(S^{t-1}_i)),
\label{eq:RiskFunc}
\end{equation}

where $loss:\mathbb{R}^d\times\mathbb{R}^d \to\mathbb{R} $ is a loss function and $h_k(S^{t-1}_i)$ returns a predicted state $\hat{S}^{t}_i$. Now, we are ready to formalize \cmip. In the next section, we introduce a concept of convergence in multi-agent systems and the relationship between convergence and coordination strategy.

\begin{problem}
    \SetKwInOut{Input}{Input}
    \SetKwInOut{Output}{Output}
    \Input{A set of state time series $\mathcal{S}=\{S_i\}$ generated by multiple agents, where $\mathcal{S}$ is a coordination event; a set of strategy functions $\mathcal{H}=\{h_k\}$; and a loss function $loss:\mathbb{R}^d\times\mathbb{R}^d \to\mathbb{R}$.}
    \Output{A set of minimum risk strategies $\mathcal{F}^*=\{f^*_i\}$ where, for each agent $i$, $f^*_i = \argmin_{h_k \in \mathcal{H}} risk(S_i,h_k)$.}
    \caption{\cmip}
\end{problem}

\section{Models and properties}
\label{sec:modelandprop}
\subsection{Convergence and coordination strategy} 
For the convergence of multi-agent systems, we adopt a notion of $\epsilon$-convergence from \cite{doi:10.1137/100791671}.
\begin{definition}[$\epsilon$-convergence]
Given  $S^0 = \{S^0_1, \dots, S^0_n\}$, a system, which is a set of strategy functions, is said to $\epsilon$-converge if, for $0<\epsilon<1/2$\footnote{

In the work by Chazelle~\cite{doi:10.1137/100791671}, at any time $t$, two agents that move and make a distance between them still less than $1/2$ is considered as a trivial step. The bound $0<\epsilon<1/2$ is defined to ignore microscopic motions.

}, there exists a time constant $t_c>0$ such that for all $t>t_c$, a set of  $n$ agent's states $S^t=\{S^t_1, \dots, S^t_n\}$ can be partitioned into disjoint subsets, where the maximum distance between any pair of agents' states $S^t_i, S^t_j$ from the same subset is less than or equal $\epsilon$. Assuming that a distance function is defined in a metric space.
\end{definition}

\begin{definition}[$\epsilon$-convergence of time series]
Given two time series $S_1, S_2$, we say that  $S_1$  $\epsilon$-converges toward $S_2$ at time $t$ if, for all time $t_c\geq t$, the distance between $S^{t_c}_1$ and $S^{t_c}_2$ is less than or equal $\epsilon$, where $0<\epsilon<1/2$. 
\end{definition}

\begin{proposition}
Suppose $0<\epsilon\leq 1$, if all time series generated by  a set of strategy functions $\mathcal{F} \subseteq \mathcal{H}$ $\epsilon$-converge toward a target path $S_w$, then $\mathcal{F}$ is a set of coordination strategies, where $\sigma=1-\epsilon$. 
\label{prop:siggamewin}
\end{proposition}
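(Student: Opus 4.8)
The plan is to promote the convergence hypothesis to an explicit coordination interval and then read off the conclusion from the definitions. First I would extract a common convergence time: since each series $S_i\in\mathcal{S}$ $\epsilon$-converges toward $S_w$, for every $i$ there is a time $t_i$ with $d(S^t_i,S^t_w)\le\epsilon$ for all $t\ge t_i$; setting $t_c=\max_i t_i$ (assumed to lie strictly inside the observation window, otherwise one argues with the limiting series), every state $S^t_i$ with $t\in[t_c,T]$ sits within $\epsilon$ of the target state $S^t_w$. Recall also that $S_w\in\mathcal{S}$, so $S_w$ is itself one of the time series under consideration.

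Next I would show that $[t_c,T]$ is a coordination interval of $\mathcal{S}$. Restrict every series to this window. Aligning any $S_i$ with $S_w$ at zero time shift, the two differ by at most $\epsilon$ at each step, so --- using that $\mathrm{sim}$ is compatible with the state metric, in the sense that pointwise $\epsilon$-closeness forces similarity at least $1-\epsilon$ --- we get $\mathrm{sim}_{foll}(S_w,S_i)\ge 1-\epsilon=\sigma$ with an optimal shift of $0$; hence $S_w\preceq S_i$, and by symmetry of $\mathrm{sim}$ (hence of $\mathrm{sim}_{foll}$ and the induced delay) also $S_i\preceq S_w$. This handles every pair containing $S_w$. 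For a pair $\{S_i,S_j\}$ with neither equal to $S_w$, I would route through the common reference $S_w$ via the triangle inequality to again obtain $\mathrm{sim}_{foll}(S_i,S_j)\ge\sigma$, and then orient the pair so that its following delay is nonnegative, yielding $S_i\preceq S_j$ or $S_j\preceq S_i$. Since this holds for all $t\in[t_c,T]$ and all $i\ne j$, $[t_c,T]$ is a coordination interval, so $\mathcal{S}$ is a coordination event; by the definition of coordination strategy, $\mathcal{F}$ is therefore a set of coordination strategies of $\mathcal{S}$ with noise-tolerance threshold $\sigma=1-\epsilon$, as claimed.

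The main obstacle will be the second step --- turning metric $\epsilon$-closeness into the $\sigma$-following relation for all pairs simultaneously. Two things have to be pinned down: a quantitative compatibility between the abstract similarity function $\mathrm{sim}$ and the metric on states, and the fact that the maximizing shift in $\mathrm{sim}_{foll}$ can be taken $\ge 0$ for at least one orientation of each pair (this part follows from symmetry of $\mathrm{sim}$). The genuinely delicate case is a pair of non-target followers: the triangle inequality a priori only gives mutual distance $2\epsilon$, not $\epsilon$, so closing it without weakening $\sigma$ needs either the specific shape/correlation-type form of the similarity the framework uses --- for which two series each close to $S_w$ remain close to one another without the factor-two loss --- or the observation that the hypothesis actually forces the whole ensemble, $S_w$ included, into a single $\epsilon$-cluster. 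Everything else is routine unwinding of the definitions of $\sigma$-following, coordination interval, coordination event, and coordination strategy.
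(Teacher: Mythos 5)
Your proposal follows essentially the same route as the paper's own proof: after a common convergence time every series lies within $\epsilon$ of $S_w$, metric closeness is converted into similarity at least $\sigma=1-\epsilon$, and the resulting pairwise following relations assemble into a coordination interval, making $\mathcal{F}$ a set of coordination strategies. You are in fact more explicit than the paper about the two delicate points — the needed compatibility between $\mathrm{sim}$ and the state metric, and the potential factor-two loss for pairs of non-target followers (the paper simply asserts the cluster has diameter at most $\epsilon$ and that every series $\sigma$-follows $S_w$) — so your account is a careful elaboration of the same argument rather than a genuinely different one.
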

\begin{proof}
 Suppose all time series generated by  a set of strategy functions $\mathcal{F} \subseteq \mathcal{H}$ $\epsilon$-converge toward a target path $S_w$. At the converging time $t\in [t_1,\dots]$ every agent's state is within its group convex hull centered at  $S^t_w$ that has the diameter at most $\epsilon$. For some time $t_2\geq t_1$, every time series has a distance between each other at most $\epsilon$. By setting $\sigma=1-\epsilon$, this implies that every time series $\sigma$-follows time series $S_w$. By assigning  all agents that have the state time series the same as $S_w$ to be informed agents, since others follow $S_w$ with some time delay, therefore, we have the $1-\epsilon$-coordination interval $[t_2,\dots]$ and all informed agents are initiators.   
\end{proof}

\begin{proposition}
Let $\mathcal{H}=\{h_k\}$ be a set of pure strategy functions. If all agents use any $h_i \in \mathcal{H}$ as a pure strategy function and their state time series $\epsilon$-converge toward a target path $S_w$, then a  mixed strategy function  $f'$,   created by a linear combination of functions in $\mathcal{H}$, generates a time series that $\epsilon$-converges toward $S_w$. 
\label{prop:mixstr}
\end{proposition}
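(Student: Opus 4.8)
The plan is to read the ``linear combination'' in the statement as a \emph{convex} one, $f' = \sum_k \lambda_k h_k$ with $\lambda_k \ge 0$ and $\sum_k \lambda_k = 1$ (this is the sense in which a mixed strategy blends pure ones; an unconstrained linear combination plainly need not preserve convergence), and then to push the convergence hypothesis through the convexity of the norm on $\mathbb{R}^d$. The single observation that drives the argument is that, because the weights sum to one,
\[
f'(x) - S^t_w \;=\; \sum_k \lambda_k\bigl(h_k(x) - S^t_w\bigr),
\]
so by the triangle inequality $\bigl\| f'(x) - S^t_w \bigr\| \le \sum_k \lambda_k \bigl\| h_k(x) - S^t_w \bigr\|$; a convex blend of the strategies can only be as far from the target as the worst ingredient.

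First I would extract from the hypothesis a one-step recursion on the distance to the target. Saying that the pure run of $h_k$ $\epsilon$-converges toward $S_w$ means, following the proof of Proposition~\ref{prop:siggamewin}, that the state is eventually trapped in a region of diameter at most $\epsilon$ around $S^t_w$ and is dragged there from its initial position; for the strategy classes actually used here (LRA, HM, AR) this is witnessed by a genuine contraction toward the target, i.e.\ there is a nondecreasing $g_t : \mathbb{R}_{\ge 0} \to \mathbb{R}_{\ge 0}$, independent of $k$, with $\bigl\| h_k(x) - S^t_w \bigr\| \le g_t\bigl( \| x - S^{t-1}_w \| \bigr)$, and such that iterating the $g_t$'s sends any finite initial distance below $\epsilon$. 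Then I would induct on $t$ for the mixed trajectory: assuming $\| S^{t-1}_i - S^{t-1}_w \| \le r_{t-1}$, the displayed inequality gives
\[
\bigl\| S^t_i - S^t_w \bigr\| \le \sum_k \lambda_k \bigl\| h_k(S^{t-1}_i) - S^t_w \bigr\| \le \sum_k \lambda_k\, g_t(r_{t-1}) = g_t(r_{t-1}) =: r_t,
\]
so the mixed orbit satisfies exactly the same distance-to-target recursion as every pure orbit and hence enters and stays inside the $\epsilon$-ball around $S_w$, which is precisely $\epsilon$-convergence of its time series toward $S_w$. If one additionally wants the coordination conclusion, feed this back into Proposition~\ref{prop:siggamewin} with $\sigma = 1-\epsilon$, the informed agents still tracking $S_w$.

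The step needing care — and the only real obstacle — is the extraction of that uniform one-step bound $g_t$. The hypothesis is literally about the iterated composition $h_k^{(t)}$ started from the given $S^0$ (and about the joint dynamics of all agents), whereas the mixed trajectory leaves the union of the pure orbits after a single step, so the pure-orbit estimates cannot be reused pointwise. I would resolve this by appealing to the fact that the concrete strategies in the framework contract toward the target from \emph{any} configuration, so the recursion holds on all of $\mathbb{R}^d$ rather than along one orbit; alternatively, at the stated level of generality one simply takes ``the $h_k$-runs $\epsilon$-converge'' to mean this uniform contraction. Everything else — the triangle-inequality/convexity estimate and the induction — is routine, and the difference between the pairwise-diameter and distance-to-$S_w$ phrasings of $\epsilon$-convergence is only a factor of two, which matters solely if one pushes on to a group-level statement.
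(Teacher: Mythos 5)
Your argument is correct and rests on the same core idea as the paper's own proof: a convex combination of points all lying within distance $\epsilon$ of $S^t_w$ still lies within distance $\epsilon$ of $S^t_w$ (the paper phrases this as ``a linear combination of values within a convex hull is still in a convex hull''). Two things distinguish your write-up, both to its credit. First, you make explicit that the combination must be \emph{convex} (nonnegative weights summing to one); the paper writes ``linear combination,'' but its convex-hull argument only goes through under exactly the normalization you impose, so your reading is the intended one. Second, and more substantively, you identify and patch a gap that the paper's two-sentence proof silently steps over: the hypothesis concerns the orbits of the pure iterations started from $S^0$, whereas the mixed trajectory leaves the union of those orbits after a single step, so the pure-run convergence cannot be invoked pointwise along the mixed trajectory. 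The paper simply asserts that at the equilibrium time ``any strategy in $\mathcal{H}$ that agent $i$ uses ensures that $i$'s state is in the convex hull,'' which implicitly evaluates each $h_k$ at the mixed state and therefore presupposes a contraction valid from arbitrary configurations; your uniform one-step bound $g_t$ and the induction $r_t = g_t(r_{t-1})$ are precisely the extra hypothesis and bookkeeping needed to make that assertion legitimate (and, as you note, the bound does hold for the concrete HM, LRA, and AR strategies the framework actually uses). In short, your proof is the paper's proof done carefully, with the same destination reached via Proposition~\ref{prop:siggamewin} and $\sigma = 1-\epsilon$ if one wants the coordination corollary.
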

\begin{proof}
Suppose all functions in $\mathcal{H}$ generate state time series that  $\epsilon$-converge toward $S_w$. At the equilibrium time $t$, when all strategies converge, any strategy in $\mathcal{H}$ that agent $i$ uses ensures that $i$'s state $S^t_i$ is in the convex hull of states centered at $S_w^t$ and has a diameter at most $\epsilon$, since a linear combination of values within a convex hull is still in a convex hull. Therefore, a  mixed strategy function  $f'$  that is created by a linear combination of functions in $\mathcal{H}$ generates a time series that $\epsilon$-converges toward $S_w$.
\end{proof}

\subsection{Convergence models}
\subsubsection{Hierarchical Model Dynamic System (HM)}
\label{sec:HMmodel}

\begin{figure}[ht!]
\centering
\includegraphics[width=.95\columnwidth]{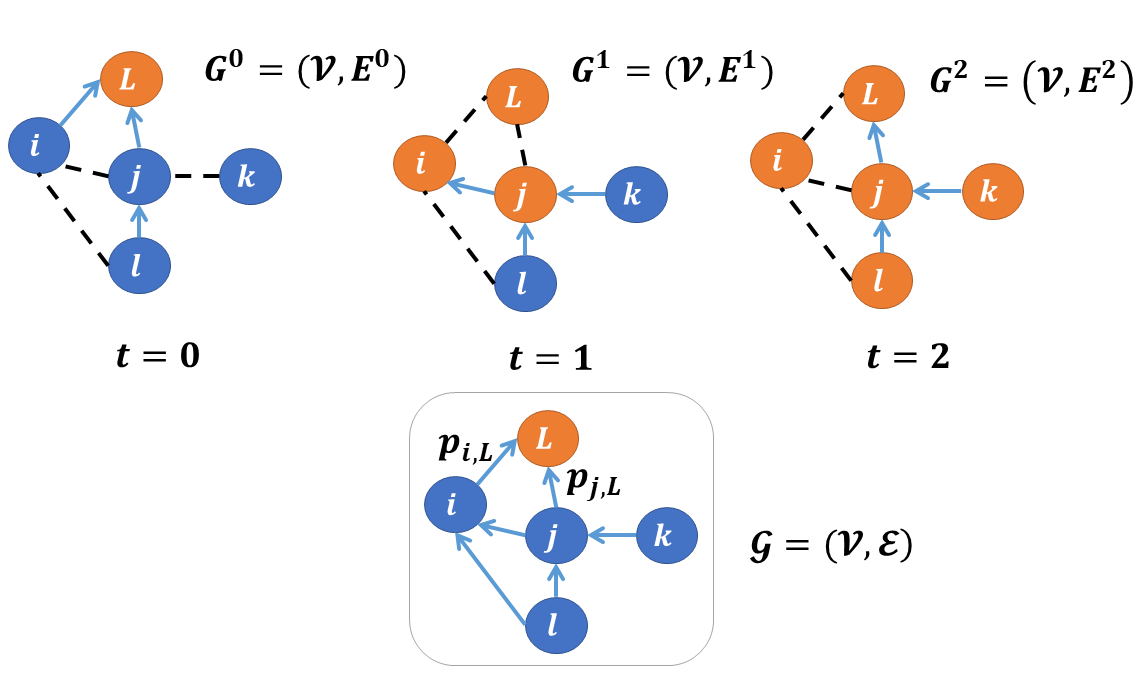}
\caption{An example of communication networks $G^t=(\mathcal{V},E^t)$ between $t=0$ and $t=2$ (above). These networks are the realization of the probabilistic following network $\mathcal{G}=(\mathcal{V},\mathcal{E})$ (below).  The arrows represent the directed edges while the dashed lines are empty edges. When the time step increases, the informed agent $L$ can increasingly spread its state (orange node) to more follower nodes (blue nodes). }
\label{fig:HMex}
\end{figure}

Let $L$ be an informed agent. Let a directed acyclic graph (DAG) $\mathcal{G}=(\mathcal{V},\mathcal{E})$ be graph, where $\mathcal{V}$ is a set of agent nodes and $\mathcal{E}$ is a set of probabilistic edges, so that if $p_{i,j}$ is a probability that $i$ follows $j$ s.t.  $p_{i,j}>0$, then  $(i,j) \in \mathcal{E}$ has the weight $p_{i,j}$. We call $\mathcal{G}=(\mathcal{V},\mathcal{E})$ a probabilistic following network. In this model, $\mathcal{G}$ is connected and every node has a path to a leader node $L$. For every time step $t\geq 0$, the system generates a communication network $G^t=(\mathcal{V},E^t)$, which is a realization of $\mathcal{G}$. The example of the process of generation of a communication network is shown in Fig.~\ref{fig:HMex}. 

Let $S^0=\{S^0_1,\dots,S^0_n\}$ be a set of agent's initial states, $\mathcal{N}^t_i=\{j|(i,j) \in E^t\} \cup \{i\}$ be a set of neighbors of $i$ in $G^t$ that $i$ follows, and $S_w$ be a target path. At any time $t$, the informed agent $L$ updates its state to be $S^t_w$. For any other uninformed agent $i$, it updates the state $S_i^{t}$ according to the aggregation of its neighbors' states. Formally, we have a strategy function for this model as follows: 

\begin{equation}
  f_{\text{HM}}(S^{t-1},i)=\left\{
  \begin{array}{@{}ll@{}}
	S^t_w, & \text{if }i = L \\
   \frac{1}{|\mathcal{N}^t_i|}\sum_{j \in \mathcal{N}^t_i} S^{t-1}_j, & \text{otherwise.}
  \end{array}\right.
\label{eq:HMStrFunc}
\end{equation}

Agents use the above strategy function to update the state $S^t_i=f_{\text{HM}}(S^{t-1},i)$ in this model. In cooperative control literature, the Eq. \ref{eq:HMStrFunc} is called a local voting protocol~\cite{lewis2013cooperative}. A system is known to converge if each communication network $G^t$ stays the same all the time and has a spanning tree that has a leader node $L$ as the root~\cite{lewis2013cooperative}. This is why $\mathcal{G}$ must be connected in order to make a system converge.

\begin{theorem}
Let $S^0=\{S^0_1,\dots,S^0_n\}$ be a set of agents' initial states within Euclidean space. Given a symmetric distance function $\text{DIST}:\mathbb{R}^d\times\mathbb{R}^d\to\mathbb{R}$. If all agents use HM strategy (Eq.~\ref{eq:HMStrFunc}) to update their states, then  all agents' state time series $\epsilon$-converge toward a target state $S^t_w$ with the expectation of the convergence time at most $t_c=n\cdot \max_i ( \text{log}_2(\frac{\text{DIST}(S^0_i,S^0_w)}{\epsilon}) /  p^*)$ time steps if $S^t_w=S^0_w$ for all $t>0$ and $p*= \min_{k,l \in \mathcal{N}, p_{k,l}>0 } p_{k,l}$. 
\label{thrm:HMconv}
\end{theorem}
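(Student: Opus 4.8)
The plan is to track a Lyapunov-type quantity, namely the maximum distance of any agent's state from the fixed target state $S^0_w$, and show it contracts geometrically in expectation. Define $D^t = \max_{i \in \mathcal{N}} \text{DIST}(S^t_i, S^0_w)$. Since the informed agent $L$ satisfies $S^t_L = S^t_w = S^0_w$ at every step, it contributes $0$ to this quantity. For any uninformed agent $i$, its new state is the average of $\{S^{t-1}_j : j \in \mathcal{N}^t_i\}$, which includes $i$ itself; because an average of points lies in their convex hull, $S^t_i$ lies in the convex hull of the states $\{S^{t-1}_j\}_{j \in \mathcal{N}^t_i}$, hence $\text{DIST}(S^t_i, S^0_w) \le \max_{j \in \mathcal{N}^t_i} \text{DIST}(S^{t-1}_j, S^0_w) \le D^{t-1}$. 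So $D^t$ is non-increasing deterministically. The point of the randomness in $G^t$ (the realization of the probabilistic following network $\mathcal{G}$) is to get \emph{strict} contraction: whenever an agent's neighbor set in $G^t$ contains a node that is strictly closer to $S^0_w$, averaging pulls it in.

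First I would set up the contraction estimate for a single agent whose realized neighborhood includes a ``good'' neighbor. If $j \in \mathcal{N}^t_i$ has $\text{DIST}(S^{t-1}_j, S^0_w) \le D^{t-1}/2$, then by expanding the average and using the triangle inequality (or convexity of the squared distance / the distance function in Euclidean space), $\text{DIST}(S^t_i, S^0_w) \le (1 - \tfrac{1}{|\mathcal{N}^t_i|})D^{t-1} + \tfrac{1}{|\mathcal{N}^t_i|}\cdot \tfrac{D^{t-1}}{2} \le (1 - \tfrac{1}{2n})D^{t-1}$, using $|\mathcal{N}^t_i| \le n$. Next, I would propagate ``closeness to $S^0_w$'' along a path to $L$ in $\mathcal{G}$. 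Because $\mathcal{G}$ is connected and every node has a directed path to $L$, fix for each node a path of length at most $n$ to $L$; at each step, the edge on that path appears in $G^t$ independently with probability at least $p^* = \min_{p_{k,l}>0} p_{k,l}$. By a layered argument — after one step, $L$'s out-neighbors (the nodes that picked the edge to $L$) are within $D^{t-1}/2$ of $S^0_w$ once $D$ is small enough relative to the inherited bound, and after roughly $n$ such steps (in expectation $n/p^*$ steps, by a geometric-waiting-time bound for each of the $\le n$ hops), the ``closeness'' has spread to the whole graph — the diameter $D^t$ is halved. Iterating, after $\log_2(D^0_{\max}/\epsilon)$ halvings with $D^0_{\max} = \max_i \text{DIST}(S^0_i, S^0_w)$, we have $D^t \le \epsilon$, which is exactly $\epsilon$-convergence toward $S^t_w$. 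Multiplying the per-halving cost $n/p^*$ by the number of halvings, and taking the max over starting agents, yields the claimed bound $t_c = n\cdot \max_i\big(\log_2(\text{DIST}(S^0_i,S^0_w)/\epsilon)/p^*\big)$.

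The main obstacle I anticipate is making the ``spreading of closeness'' argument rigorous and getting the constants to match the stated bound exactly — in particular justifying that one full propagation sweep through the DAG both (a) takes expected time at most $n/p^*$ and (b) contracts $D$ by a full factor of $2$ rather than some weaker factor. This requires care because the realized neighborhoods $\mathcal{N}^t_i$ change every step, so a node that was ``close'' at time $t$ might average with far-away nodes at time $t+1$; the deterministic monotonicity of $D^t$ is what rescues this — closeness, once attained by a node, degrades by at most a factor that the subsequent contraction steps absorb. I would likely argue by fixing an ordering of nodes consistent with distance-to-$L$ in $\mathcal{G}$ and showing inductively that within expected $n/p^*$ steps every node has had at least one chance to average with an already-close neighbor, using linearity of expectation over the $\le n$ hops and the geometric waiting time $1/p^*$ per hop. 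An alternative, cleaner route is to invoke the general averaging/agreement-system convergence machinery of Chazelle~\cite{doi:10.1137/100791671,Chazelle2019} once we verify HM is an instance of a reversible averaging system with a self-loop at every node and a uniform lower bound $p^*$ on edge probabilities, and then just bound the mixing/convergence time in terms of $n$, $p^*$, and $\log(1/\epsilon)$; I would mention this as the fallback if the direct combinatorial bound proves delicate.
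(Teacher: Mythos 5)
Your proposal follows the same quantitative skeleton as the paper's own argument: the distance to $S^0_w$ is halved each time the relevant edge is realized, each realization waits an expected $1/p^{*}$ steps, $\log_2\!\left(\text{DIST}(S^0_i,S^0_w)/\epsilon\right)$ halvings suffice, and a factor of $n$ is meant to absorb the path length to $L$ in $\mathcal{G}$. What you add that the paper's proof does not have is the Lyapunov observation that $D^t=\max_i \text{DIST}(S^t_i,S^0_w)$ is deterministically non-increasing because each update is a convex combination of neighbor states; this makes explicit the ``closeness, once attained, does not degrade'' step that the paper leaves implicit. Your fallback of casting HM as a leader-anchored averaging system and invoking Chazelle's machinery (cf.\ Theorem~\ref{thrm:LRAconv}) is also a legitimate alternative route that the paper uses only for LRA.

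The difficulty you flag, however, is real, and it is not resolved by either your sketch or the paper's proof. When $\mathcal{N}^t_i$ contains nodes other than $i$ and the one ``good'' neighbor, your single-step estimate gives a contraction factor of $\left(1-\tfrac{1}{2|\mathcal{N}^t_i|}\right)$, not $\tfrac{1}{2}$, so a full halving of $D$ costs on the order of $n$ good steps rather than one; carried through honestly, this inflates the bound by at least another factor of $n$ beyond the stated $t_c$, and the multi-hop propagation of closeness through $\mathcal{G}$ adds further bookkeeping you have not closed. The paper sidesteps exactly this by analyzing only the special case $\mathcal{N}^t_i=\{L,i\}$, where averaging with $L$ halves the distance exactly, and then asserting the general bound by multiplying by $n$ without justifying larger neighborhoods or longer paths. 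So your proposal is not weaker than the paper's proof --- it is more candid about where the argument is incomplete --- but as written it does not establish the stated constant for a general connected $\mathcal{G}$, and you should either restrict to the direct-to-leader topology (as the paper implicitly does) or accept a worse dependence on $n$ via the averaging-system route.
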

\begin{proof}

In the first time step, $S^0=\{S^0_1,\dots,S^0_n\}$ forms a convex hull and $S^0_w$ is inside this convex hull because $S^0_w\in S^0$. Given that $L$ is the agent that represents the state of $S^t_w$ where $S^t_L=S^t_w=S^t_0$, for any agent $i$ s.t. $N^t_i=\{L,i\}$, this implies that $i$ is an agent that has no edges to other nodes except $L$ at time $t$.  According to Eq.~\ref{eq:HMStrFunc}, because $\text{DIST}(S^t_L,S^t_i)=1/2 \times (S^t_L+S^t_i)$ and $S^t_L$ is the same all the time, the distance $\text{DIST}(S^t_L,S^t_i)$ reduces by half whenever the link $(i,L) \in E^t$.

Let $T_i \sim \text{Binomial}(t_i,p_{L,i})$ be a random variable of the number of steps it takes until the appearance of a link $(i,L)$ such that $\text{DIST}(S^t_L,S^t_i) = \epsilon$, using $t_i$ trials. We can find the expectation of the time $\mathbb{E}(T_i)$, {\em i.e.,} the expected number of trials $\hat{t}_i$ until  $\text{DIST}(S^t_L,S^t_i) \leq \epsilon$.

From Eq.~\ref{eq:HMStrFunc},
\begin{equation*}
\epsilon = \frac{\text{DIST}(S^0_i,S^0_w)}{2^{T_i}}
\end{equation*}
\begin{equation*}
2^{T_i} = \frac{\text{DIST}(S^0_i,S^0_w)}{\epsilon}
\end{equation*}
\begin{equation*}
{T_i} = \text{log}_2\left( \frac{\text{DIST}(S^0_i,S^0_w)}{\epsilon}\right).
\end{equation*}

Then, by definition of the Binomial expectation,
\begin{equation*}
\mathbb{E}({T_i}) = \hat{t}_i\times  p_{i,L}= \text{log}_2\left({\frac{\text{DIST}(S^0_i,S^0_w)}{\epsilon}}\right).
\end{equation*}

Therefore,
\begin{equation*}
\hat{t}_i = \frac{1}{ p_{i,L}}\text{log}_2\left({\frac{\text{DIST}(S^0_i,S^0_w)}{\epsilon}}\right).
\end{equation*}
In general, we can have an upper bound $t_c \ge \hat{t}_i$ of the expectation of the convergence time as follows:
\begin{equation*}
t_c = n\cdot \max_i \left\{{\frac{1}{ p^*}\text{log}_2\left({\frac{\text{DIST}(S^0_i,S^0_w)}{\epsilon}}\right)}\right\},
\end{equation*}

where
\begin{equation*}
p*= \min_{k,l \in \mathcal{N}, p_{k,l}>0 } p_{k,l}. 
\end{equation*}
\end{proof}
 According to Theorem~\ref{thrm:HMconv} and Proposition~\ref{prop:siggamewin}, if the target path $S_w$ has its target state $S^t_w$ as a fixed point: $S^t_w=S^0_w$ for all $t>0$, then the set of strategy functions $\mathcal{F}$ that contains only HM strategy functions is a set of coordination strategies. In other words, if all agents use $f_{\text{HM}}$ to update their states, then their states converge to a target path. Therefore, a coordination interval exists in their state time series. In contrast, if a target state $S^t_w$ can be changed, the the group still follows the path $S_w$, because only $L$ influences the group and $L$'s state path is $S_w$. However, the convergence might not exist if the difference between two consecutive time steps within the target path is always greater than the group convergent rate.

\subsubsection{Local Reversible Agreement system (LRA)}
\label{sec:LRA}
Let $P^0=\{P^0_1,\dots,P^0_n\}$ be a set of physical points, $S^0$ be a set of initial states, $S_w$ be a target path, $L$ be an informed agent who updates its state in correspondence to $S_w$, and $g(P^{t},S^{t},i)$ be a projection function that agents use to update their physical points. If a state point is a velocity vector, then the projection function is simply the current position plus the velocity vector times the timestep.  First, for $t>0$, we update the physical point  $P_i^{t}=g(P^{t-1},S^{t-1},i)$. Second, we create a set of Delaunay triangulations from $P^t$ to create a communication network $G^t=(\mathcal{V},E^t)$. If $P^t_i$ and $P^t_j$ form the same triangle within the physical space, then $(i,j) \in E^t$. Third, we update a state of each agent based on the structure of  $G^t$.  The example of how to find the neighbors of each individual in LRA is in Fig.~\ref{fig:LRAex}, which defines physical points as positions of individuals and states as movement directions.

Given a triangulation membership function $\delta$ and a set of all points $P^t=\{P^t_i\}$. The function $\delta(P^t_i,P^t_j)=1$ if  $P^t_i,P^t_j$ are members of a triangulation s.t. no other points in $P^t$ are in the triangulation (note that $\delta(P^t_i,P^t_i)=1$), otherwise it is zero.  We have a strategy function for LRA as follows.

\begin{figure}[ht!]
\centering
\includegraphics[width=.6\columnwidth]{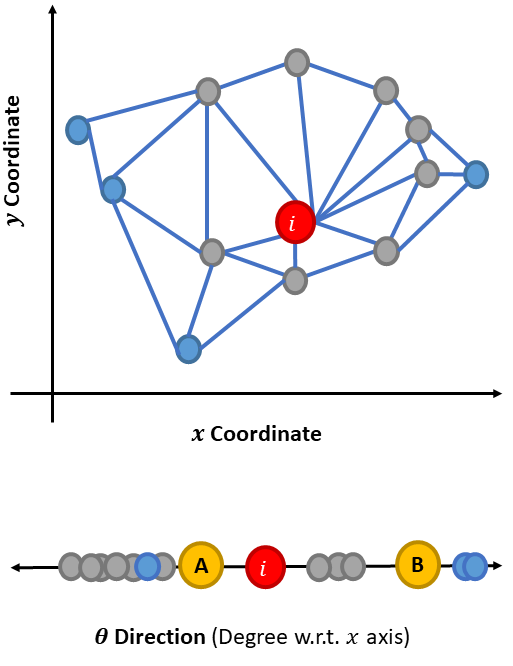}
\caption{ An example of physical points as positions and state points as directions.  In position space (above), the individual $i$ (red node) has all gray nodes as its neighbors in LRA since they are neighbors in  Delaunay triangulation. In the direction space (below), $i$ updates its next direction to be A rather than B since B is outside the $i$'s neighbor convex hull, and A is the averages of grey nodes. }
\label{fig:LRAex}
\end{figure}

\begin{equation}
  f_{\text{LRA}}(P^t,S^{t-1},i)=\left\{
  \begin{array}{@{}ll@{}}
	S^t_w, & \text{if }i = L \\
   \frac{\sum_{j} S^{t-1}_j\cdot \delta(P^t_i,P^t_j)}{\sum_j \delta(P^t_i,P^t_j) }, & \text{otherwise}
  \end{array}\right.
\label{eq:LRAStrFunc}
\end{equation}

The difference between $f_{\text{HM}}$ (Eq.~\ref{eq:HMStrFunc}) and $f_{\text{LRA}}$ (Eq.~\ref{eq:LRAStrFunc}) is that $f_{\text{HM}}$ infers the next state based on a {\em fixed} structure of a probabilistic following network $\mathcal{G}$, independently from the physical space $P^{t}$, whereas $f_{\text{LRA}}$ predicts the next state based on the physical space $P^{t}$.  In other words, $f_{\text{HM}}$ represents an assumption that an agent follows a fixed set of specific individuals w.r.t. the preference graph $\mathcal{G}$ regardless of their relative physical position, while $f_{\text{LRA}}$ represents an assumption that an agent follows anyone who happens to be around without any preference to follow specific individuals. The next theorem  shows that the Local Reversible Agreement is $\epsilon$-convergent.

\begin{theorem}[Chazelle 2011\cite{doi:10.1137/100791671}]
For any $0<\epsilon\leq \rho/n$, an $n$-agent reversible agreement system is $\epsilon$-converged in time $\mathcal{O}(\frac{1}{\rho}\cdot n^2 \text{log}_2(\frac{1}{\epsilon}) $). Where $\rho>0$ is the time-independent agreement parameter corresponding to the system.\footnote{In a Bidirectional agreement system, which is a general model of a reversible agreement system, the $\rho>0$ condition is a necessary condition to make systems converging~\cite{doi:10.1137/100791671}.}
\label{thrm:LRAconv}
\end{theorem}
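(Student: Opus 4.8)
The plan is to treat this as Chazelle's theorem~\cite{doi:10.1137/100791671} and to reconstruct the argument that makes the convergence time polynomial — rather than exponential — in $n$, namely a reversible energy (variance) contraction. First I would pass to the abstract setting underlying the statement: at step $t$ each agent $i$ holds a state $x^t_i\in\mathbb{R}^d$ and updates linearly, $x^t_i=\sum_j P^t_{ij}\,x^{t-1}_j$, where each $P^t$ is row-stochastic, has positive self-loops $P^t_{ii}>0$, has every nonzero off-diagonal entry at least $\rho$, and — the reversibility hypothesis — admits a fixed positive measure $(\pi_i)_{i\in\mathcal N}$ with $\pi_i P^t_{ij}=\pi_j P^t_{ji}$ for all $t$. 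The LRA dynamics of Eq.~\ref{eq:LRAStrFunc} is an instance: the graph $G^t$ obtained from the Delaunay triangulation of $P^t$ is connected at every step, the averaging weights can be taken reversible, and $\rho$ is the agreement parameter of the resulting system. (The informed agent $L$ is pinned to $S_w$; when $S^t_w=S^0_w$ it is simply one agent that never moves, which only speeds up convergence.)

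Next I would fix the potential. Let $\bar x=\big(\sum_i\pi_i x^t_i\big)\big/\sum_i\pi_i$; reversibility and stochasticity make $\bar x$ invariant in $t$, so the $\pi$-weighted variance
\[
V_t=\sum_{i\in\mathcal N}\pi_i\,\big\|x^t_i-\bar x\big\|^2
\]
is well posed, and a short computation in the inner product $\langle\cdot,\cdot\rangle_\pi$ (using that each $P^t$ is self-adjoint there) shows $V_t$ is non-increasing, with one-step drop comparable — up to the self-loop weight $P^t_{ii}$, which here is $\ge 1/n$ — to the Dirichlet form $\tfrac12\sum_{i,j}\pi_iP^t_{ij}\|x^{t-1}_i-x^{t-1}_j\|^2$. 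The two inputs I then need are (i) every edge weight appearing in that sum is $\ge\rho\cdot\min_i\pi_i$, and (ii) $G^t$ is a connected graph on $n$ vertices, so a conductance / Poincar\'e inequality bounds the Dirichlet form below by $\Omega(\rho/n^2)\cdot V_{t-1}$ — the $1/n^2$ being tight, as a path on $n$ vertices already shows. Combining, $V_t\le\big(1-\Omega(\rho/n^2)\big)V_{t-1}$.

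The endgame is then a geometric iteration: after $t=\mathcal O\!\big(\tfrac{n^2}{\rho}\log\tfrac1\epsilon\big)$ steps $V_t$ is below $\epsilon^2$ times whatever slack the conversion needs, and translating the $\pi$-weighted variance into the worst-case pairwise distance $\max_{i,j}\|x^t_i-x^t_j\|$ — which costs a factor governed by $1/\min_i\pi_i$ — yields $\max_{i,j}\|x^t_i-x^t_j\|\le\epsilon$, i.e. the whole system lies in a single $\epsilon$-cluster, which certainly refines into the partition required by the definition of $\epsilon$-convergence. This is, I expect, where the hypothesis $\epsilon\le\rho/n$ enters: it is the regime in which the variance bound is strong enough, relative to the system's own parameters, to force all pairwise gaps below $\epsilon$ within the claimed number of steps. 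I would keep all constants schematic, since the $\mathcal O(\cdot)$ swallows them.

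The main obstacle is step (ii): obtaining the conductance lower bound on the one-step Dirichlet form with an explicit, uniform $(\rho,n)$-dependence valid for every admissible time-varying graph $G^t$. This is precisely the point at which reversibility is indispensable — without it the known lower bounds for time-varying averaging are exponential in $n$, so no non-reversible argument can reach a polynomial bound — and it must be invoked exactly here. A secondary subtlety, worth stating carefully, is pinning down why $\epsilon\le\rho/n$ is the correct regime for passing from $V_t$ to pairwise distances; the remaining ingredients (invariance of $\bar x$, monotonicity of $V_t$, the geometric iteration) are routine. Since the statement is quoted verbatim from~\cite{doi:10.1137/100791671}, in the paper I would cite it directly and, space permitting, include the variance-contraction sketch above; the fully quantitative version, with the sharp $n^2$ via Chazelle's total-$s$-energy machinery, is the content of that reference.
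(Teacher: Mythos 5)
The paper offers no proof of this statement at all: Theorem~\ref{thrm:LRAconv} is imported verbatim from Chazelle~\cite{doi:10.1137/100791671} and used as a black box (the only locally proved consequence is Corollary~\ref{corr:LRAconv}). So there is no in-paper argument to match; your closing decision to cite the reference and at most sketch the idea is exactly what the paper does. That said, your sketch is a genuinely different route from Chazelle's actual proof: you run a one-step $\pi$-weighted variance contraction via a Poincar\'e/conductance bound on the Dirichlet form, whereas Chazelle bounds the \emph{total $s$-energy} $\sum_t\sum_{(i,j)\in E^t}\|x^t_i-x^t_j\|^s$ summed over all time steps and extracts the count of nontrivial steps from that generating-function-style quantity. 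Your route is more elementary and transparent when it applies; the $s$-energy route is what actually delivers the uniform bound in the generality the theorem claims.

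The concrete gap in your sketch, viewed as a standalone proof of the stated theorem, is your step (ii): you assume $G^t$ is connected at every step. That holds for the Delaunay-based LRA instance, but the theorem is about general reversible agreement systems, and its conclusion explicitly allows the state set to fragment into \emph{several} disjoint $\epsilon$-clusters --- i.e., the communication graph may be disconnected at some or all steps. On a disconnected step the Poincar\'e constant is zero and your per-step contraction $V_t\le(1-\Omega(\rho/n^2))V_{t-1}$ simply fails; worse, with time-varying topology the graph can disconnect and reconnect adaptively, which is precisely the adversarial behaviour the $s$-energy machinery is built to control and which a fixed-potential contraction argument does not handle. A secondary loose end is the condition $\epsilon\le\rho/n$, which you flag but do not actually derive; in Chazelle's development it comes out of the conversion from the $s$-energy bound to a step count, not from a variance-to-diameter conversion. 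Since the paper itself treats this as a citation, none of this affects the paper, but if you intend your sketch to stand as a proof you must either add per-step connectivity as a hypothesis (legitimate for the LRA corollary) or replace step (ii) with the $s$-energy argument.
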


According to the work by Chazelle~\cite{doi:10.1137/100791671}, LRA is still converged even if one of the agents does not update. In our case, if $S^t_w$ is the same for every time step, then the fixed agent is $L$ who always has $S^t_L = S^0_w$. 
\begin{corollary}
The $n$-agent LRA that has $G^t$ being created from Delaunay triangulation sets converges to a single point. 
\label{corr:LRAconv}
\end{corollary}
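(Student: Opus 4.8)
The plan is to recognize the LRA dynamics (with the Delaunay-triangulation communication graph) as an instance of Chazelle's reversible agreement system, apply Theorem~\ref{thrm:LRAconv}, and then sharpen its conclusion — which a priori permits several limiting clusters — to convergence to a \emph{single} point by exploiting the fact that a Delaunay triangulation is always a connected graph.

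First I would verify the hypotheses of a reversible agreement system. At each step $t$, Eq.~\ref{eq:LRAStrFunc} replaces $S^{t-1}_i$ by the equal-weight average of the states of the closed Delaunay neighborhood $\{\, j : \delta(P^t_i,P^t_j)=1 \,\}$. The relation $\delta(P^t_i,P^t_j)=1$ is symmetric (two points either do or do not lie on a common empty Delaunay simplex, regardless of order), so the communication network $G^t$ is undirected — this is exactly the ``reversibility'' requirement. Each nonzero averaging weight equals $1/\sum_j\delta(P^t_i,P^t_j)\ge 1/n$, a bound independent of $t$; hence the system admits a positive, time-independent agreement parameter $\rho$ (one may take $\rho$ to be a fixed positive function of $n$). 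Theorem~\ref{thrm:LRAconv} then applies: for every $\epsilon$ with $0<\epsilon\le\rho/n$, after a finite time the states $S^t$ settle into $\epsilon$-diameter clusters that no longer interact.

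Next I would show the cluster decomposition is trivial. A Delaunay triangulation of a point set in general position in Euclidean space triangulates the convex hull of the points, so $G^t$ is connected for every $t$ (degenerate configurations are handled by any fixed tie-breaking rule that still returns a connected triangulation). If two distinct clusters persisted, the Delaunay edge joining them — which exists because the triangulation spans \emph{all} points — would keep averaging their states together, contradicting ``no longer interact''; therefore there is exactly one cluster, so $\mathrm{diam}(S^t)\le\epsilon$ for all large $t$. Applying this for a sequence $\epsilon_k\downarrow 0$ (each $\epsilon_k\le\rho/n$) gives $\mathrm{diam}(S^t)\to 0$; since the states remain in the bounded convex hull of $S^0$, the Cauchy criterion yields a common limit and all agents converge to a single point (equal to $S^0_w$ when $S^t_w$ is held fixed, since then $L$ never moves).

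The step I expect to be the main obstacle is precisely this upgrade from Chazelle's cluster-wise $\epsilon$-convergence to single-point convergence: Theorem~\ref{thrm:LRAconv} by itself does not forbid several limit clusters, and ruling them out rests on the persistent connectivity of the Delaunay graph, which in turn needs a clean statement that Delaunay triangulations are connected even for non-generic point sets. A secondary technical point is confirming that ``no longer interact'' really means the inter-cluster Delaunay edges have vanished, not merely that cross-cluster influence has become small; if only the weaker statement is available, one instead runs the standard contraction argument for averaging over always-connected graphs (the convex hull of the states is non-expanding and shrinks geometrically over windows of $O(n)$ steps) to conclude $\mathrm{diam}(S^t)\to 0$ directly.
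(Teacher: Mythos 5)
Your proposal is correct and follows essentially the same route as the paper's proof, which likewise rests on the two facts that the Delaunay graph $G^t$ is always connected and that Chazelle's $\epsilon$-convergence (Theorem~\ref{thrm:LRAconv}) then forces the whole group into a single cluster. The paper's own argument is considerably terser (it concludes ``by transitivity'' without verifying the reversibility/weight hypotheses or taking $\epsilon\downarrow 0$), so your write-up supplies details the paper glosses over rather than a different idea.
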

\begin{proof}
The graph $G^t$ that is built from Delaunay triangulation is always connected. For each time step, each agent converges to the center of the neighbors' convex hull. Since everyone is connected and the system is $\epsilon$-converge, by transitivity, the entire group converges to the single point.  
\end{proof}

In fact, if the fixed point is $S^0_w$, then, at the equilibrium point, all states form a convex hull around $S^0_w$ with the diameter at most $\epsilon$~\cite{doi:10.1137/100791671}. In contrast, if  $S^t_w$ is not always the same, then the group moves following $S^t_w$ with some time delay.

The Corollary~\ref{corr:LRAconv} tells us that if we follow our physical neighbors (e.g. directions) and everyone does the same thing, the entire group will reach the same consensus (moving to the same direction). In general, if $G^t$ is strongly connected, everyone follows neighbors in $G^t$, and there is one individual $L$ who never follows anyone, then the group converges to $L$'s state. Additionally,  Corollary~\ref{corr:LRAconv} is always true in any metric space where a Delaunay triangulation exists.

According to Corollary~\ref{corr:LRAconv} and Proposition~\ref{prop:siggamewin}, if a target state never changes: $S^t_w=S^0_w$ for all $t>0$, then the set of strategy functions $\mathcal{F}$ that contains only LRA strategy functions is a set of coordination strategies.

\subsubsection{Discussion}
According to Theorem~\ref{thrm:HMconv}, Corollary~\ref{corr:LRAconv}, Proposition~\ref{prop:siggamewin}, and Proposition~\ref{prop:mixstr}, if the data has coordination behaviors, then either HM, LRA, or a mix of those strategies may be the cause of the coordination. However, the question still remains regarding how to infer which strategy is the cause of the coordination. In the next section, we propose a solution to address this question.

\subsection{Non-coordination strategy: Autoregressive-moving-average model} 
Beside agents change their states randomly, Autoregressive-moving-average model (ARMA)~\cite{whitle1951hypothesis} is a strategy that agents change states based on their own states in the past. ARMA is a strategy that has no guarantee that if all agents use this strategy, then the entire group will converge to any state. Formally, given $S^t_i$ is a time series of agent $i$ at time $t$, ARMA model of $S^t_i$ can be represented by a function below:

\begin{equation}
\label{eq:ARMA}
    S^t_i= c+\gamma^t+\sum_{a=1}^p \varphi^a S^{t-a}_i+ \sum_{b=1}^q \theta^b \gamma^{t-b},
\end{equation}

where $\gamma^t$ is a white-noise term at time $t$, $\varphi^1,\dots,\varphi^p$ and $\theta^1,\dots,\theta^q$ are parameters of the model, and $c$ is a constant. The ARMA model represents that an agent state $S^t_i$ has dependency from its own states in the past with some noise. In the Autoregressive model (AR), the term $\sum_{b=1}^q \theta^b \gamma^{t-b}$ in Eq.~\ref{eq:ARMA} is omitted. For simplicity, in our paper, we study AR model only the case that $S^t_i$ is the average of its $p$ states in the past.
\section{Method}
\label{sec:Method}
We are now ready to formally state our approach of inferring movement coordination strategies of agents represented by a collection of time series.


\subsection{Setting}
\begin{figure}[ht!]
\centering
\includegraphics[width=.6\columnwidth]{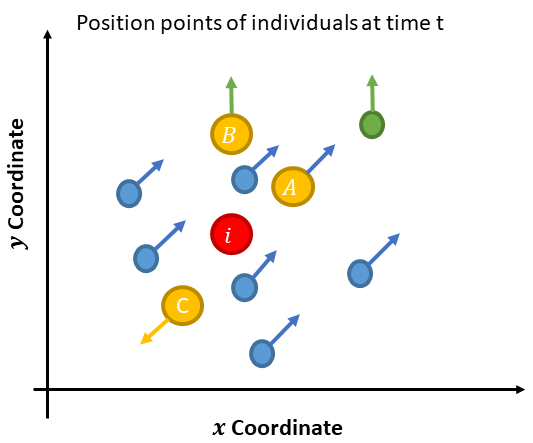}
\caption{ An example of movement strategy inference for $i$. Given the information on positions and directions of individuals in the past (blue and green nodes), we want to infer the $i$'s strategy of movement that can be whether that $i$'s next direction  follows its neighbors (A node), or follows specific individuals (B node), or neither (C node). }
\label{fig:Direx}
\end{figure}

We define a movement direction as a state, but the approach generalizes to arbitrary definitions of states that are defined on Euclidean space.  Hence, $\mathcal{S}_k$ is a set of time series of direction.
We use direction, rather than position, to define the state of an individual and the proxy for collective coordination. The main reason is that directional coordination is common in biology. For example, in~\cite{Katz18720}, the authors report that a fish tends to imitate the direction of neighbors ahead to form collective movement, and other examples abound.
Secondly, synchronization to the same direction implies a collective movement while synchronization to the same position implies staying in the same position without movement. In this paper, we focus on coordination of movement, therefore, we cannot use positions as states to infer strategies of movement.
The final reason for defining states as directions is to use a dimension independent of the positions, which we use to define states of individual strategies. We need to differentiate between the strategy that an individual follows specific individuals' direction regardless of their physical neighbors' choices of direction versus the strategy that an individual follows their physical neighbors' direction without any preference to follow specific individuals.
We assume the following are given as inputs: a set of possible strategy functions $\mathcal{H}$, a collection of position-time-series sets $\mathbb{P}=\{\mathcal{P}_k\}$, and a collection of direction-time-series sets $\mathbb{S}=\{\mathcal{S}_k\}$, where $\mathcal{P}_k=\{P_1,\dots,P_n\}$ and  $\mathcal{S}_k=\{S_1,\dots,S_n\}$.  The data record of $k$th coordination event consists of a pair of $\mathcal{P}_k,\mathcal{S}_k$ that were generated by $n$ agents moving in two-dimensional position space to form directional coordination; all agents coordinately move to the same direction in this interval. Each $\mathcal{S}_k$ contains a coordination interval. The goal to to infer the set of strategy functions $\mathcal{F}\subseteq\mathcal{H}$ that generated $\mathcal{S}_k$. The framework overview is in Fig.~\ref{fig:FrameworkOverview}.

For simplicity of the exposition, we deploy three strategy functions for our framework: HM, LRA, and Auto regressive model (AR). Again, other candidate strategies are admissible. However, these three strategies are canonical exemplars since they make it possible to determine whether the strategy functions that generated a time series of directions of each agent is more hierarchical (HM), or it is more dependent on the physically proximity neighbors (LRA), or it is just a simple function of the agent's past history, independent of its neighbors. We separate $\mathbb{P}$ and $\mathbb{S}$ to be a training part, $(\mathbb{P}_\text{train} \subset \mathbb{P},\: \mathbb{S}_\text{train} \subset \mathbb{S})$, to perform a model fitting, and a validation part, $(\mathbb{P}_\text{val}= \mathbb{P} - \mathbb{P}_\text{train},\: \mathbb{S}_\text{val}= \mathbb{S} - \mathbb{S}_\text{train})$, to perform a model selection. In the case that the input is only a single physical time series $\mathcal{P}$, we use FLICA framework~\cite{FLICAtkdd} to find coordination events and treat each event as a single $\mathcal{P}_k$. Hence, we have $\mathbb{P}$ containing multiple coordination events from $\mathcal{P}$. Then, we create a set of direction-time-series sets $\mathbb{S}$ from $\mathbb{P}$.
The example of movement strategy inference is in Fig.~\ref{fig:Direx}

\subsection{Model fitting}
\setlength{\intextsep}{0pt}
\IncMargin{1em}
\begin{algorithm2e}
\caption{ModelFittingFunction}
\label{algo:ModelFittingFunction}
\SetKwInOut{Input}{input}\SetKwInOut{Output}{output}
\Input{ Position-time-series sets  $\mathcal{P}_\text{train}$,  Direction-time-series  $\mathcal{S}_\text{train}$, and a Threshold $\vec{\kappa}$. }
\Output{Support vectors: $\vec{w}^*_1,\dots,\vec{w}^*_n$. }
\begin{small}
\SetAlgoLined
\nl Let $T$ be a time length of $\mathcal{S}_\text{train}$\;
\nl Inferring dynamic following network $G_{foll}$  from $\mathcal{S}$ using FLICA~\cite{FLICAtkdd}\;
\nl Inferring a global-leadership ranking $R_{L}$ from  $G_{foll}$\;
\nl Aggregating $G_{foll}$ and pruning its edges using $R_{L}$ to create a DAG probabilistic network $\mathcal{G}$\;
\nl \For{ $i=1$ to $n$}
{
    \For{ $t=1$ to $T$}
    {
\nl    Inferring $f'_{\text{HM}}(S^{t-1},i)= \frac{ S^{t-1}_i + \sum_{(i,k) \in \mathcal{E}} p_{i,k}\cdot S^{t-1}_k }{1+\sum_{(i,k) \in \mathcal{E}} p_{i,k}}$ (Eq~\ref{eq:HMstrFunc2} w.r.t. $\mathcal{G}$) to predict $S^{t}_i$\;
\nl    Inferring $f_{\text{LRA}}(P^t,S^{t-1},i)=\frac{\sum_{j} S^{t-1}_j\cdot \delta(P^t_i,P^t_j)}{\sum_j \delta(P^t_i,P^t_j) }$ (Eq~\ref{eq:LRAStrFunc})  to predict $S^{t}_i$ \;
\nl    Inferring $f_{\text{AR}}(S^{t-1},i)=(1/p)\times\sum_{a=1}^p S^{t-a}_i$ to predict $S^{t}_i$ for auto regressive model\;
    }
}
\nl Running Interior point algorithm~\cite{doi:10.1137/S1052623497325107} to solve Problem~\ref{eq:opteq} in order to find $\vec{w}^*_1,\dots,\vec{w}^*_n$\;
\nl Return $\vec{w}^*_1,\dots,\vec{w}^*_n$\;
\end{small}
\end{algorithm2e}\DecMargin{1em}

 We concatenate all time series in $\mathbb{P}_\text{train}$ to be a single time series $\mathcal{P}_\text{train}$ and also concatenate $\mathbb{S}_\text{train}$ to be $\mathcal{S}_\text{train}$. Then we use $\mathcal{P}_\text{train},\: \mathcal{S}_\text{train}$ to perform model fitting. 

Before proceeding with the model fitting, the HM strategy function requires a probabilistic following network $\mathcal{G}=(\mathcal{V},\: \mathcal{E})$. We infer $\mathcal{G}$ from $\mathcal{S}$ by using FLICA~\cite{FLICAtkdd} to create a dynamic following network of $\mathcal{S}_\text{train}$. In this paper, the time window threshold of FLICA has been set at $\omega = 60$ time steps. In the next step, we find a global-leadership ranking, then we aggregate and normalize this dynamic network to be a DAG probabilistic network, such that the high-rank agents do not have a probabilistic following edge to low-rank agents in $\mathcal{G}$. After we have $\mathcal{G}$, we calculate $f'_{\text{HM}}$ as follows:

\begin{equation}
\label{eq:HMstrFunc2}
f'_{\text{HM}}(S^{t-1},i)= \frac{ S^{t-1}_i + \sum_{(i,k) \in \mathcal{E}} p_{i,k}\cdot S^{t-1}_k }{1+\sum_{(i,k) \in \mathcal{E}} p_{i,k}},
\end{equation}

where $ p_{i,k} \in [0,1]$ is a probabilistic weight of edge $(i,k) \in \mathcal{E}$. For the LRA strategy function, we use  the same function as in Eq~\ref{eq:LRAStrFunc}. Lastly, we apply auto regressive model to fit on $\mathcal{S}$ to represent $f_{\text{AR}}$. The $f_{\text{AR}}$ predicts the next state of the agent w.r.t. the average of the states from previous $p$ steps in $\mathcal{S}$. In this paper, we set $p=5$. As mentioned before, we focus on three strategy functions: $f_{\text{HM}}$ (Eq~\ref{eq:HMstrFunc2}), $ f_{\text{LRA}}$ (Eq~\ref{eq:LRAStrFunc}), and $f_{\text{AR}}$. We can view them as a mixed strategy, given a support vector $\vec{w}=[w_1,w_2,w_3]^\text{T}$. 

\begin{equation}
f_{\text{mix}}(a,\vec{w}) = w_1f'_{\text{HM}}(a)+ w_2f_{\text{LRA}}(a)+w_3f_{\text{AR}}(a)
\label{eq:Mix3Str}
\end{equation}
Here $a=(P^t,S^{t-1},i)$, $w_1$ is a support of HM, $w_2$ is a support of LRA, $w_3$ is a support of an auto regressive model, and $w_1,w_2,w_3 \in [0,1]$. We use the sum square error (SSE) as our loss function. Our main goal is to find $\vec{w}^*$ that minimizes $risk(\mathcal{P},\mathcal{S},\vec{w}^*,i)$ below:

\begin{equation}
risk(\mathcal{P},\mathcal{S},\vec{w},i) = \sum_{t=1}^T (D(t))^\text{T}\cdot(D(t)),
\end{equation}
where $D(t)=|f_{\text{mix}}(P^t,S^{t-1},i,\vec{w}) - S^t_i|$ is a difference between predicted and actual direction in which agent $i$ moved at time $t$. For each agent $i$, given $\mathcal{P}_\text{train},\mathcal{S}_\text{train}$ and a threshold vector $\vec{\kappa}=[k_1,k_2,k_3]^\text{T}$, we can find the optimal support vector $\vec{w}^*$ as the optimization problem:

\begin{equation}
\begin{aligned}
& \underset{\vec{w}}{\text{minimize}}
& & risk(\mathcal{P}_\text{train},\mathcal{S}_\text{train},\vec{w},i)  \\
& \text{subject to}
& &  w_i \geq \kappa_i, \; w_i \in \vec{w}, \kappa_i \in \vec{\kappa}.\\
& & & \sum_i w_i =1 \\
& & & w_i,\kappa_i \in [0,1].
\end{aligned}
\label{eq:opteq}
\end{equation}

We use the Interior point algorithm~\cite{doi:10.1137/S1052623497325107}, which is a large-scale algorithm, to solve Problem~\ref{eq:opteq}, which can be consider as a constrained linear least-squares problem. A threshold $\vec{\kappa}$ represents a model bias toward specific strategies. For example, if we have prior information that, with high probability, an agent $i$ uses LRA strategy function, then we can set $\kappa_2 =0.5$ to enforce the optimizer to vary the support $w_2$ within $[0.5,1]$ interval instead of the $[0,1]$ interval. The benefit of having $\vec{\kappa}$ is to prevent  overfitting.  For any agent $i$, suppose $\vec{w}^*_{i,k}$ is the optimal solution of an optimization problem~\ref{eq:opteq} w.r.t. $\vec{\kappa}_k$, then we call $(\vec{w}^*_{i,k}, \vec{\kappa}_k)$ a model. The pseudo code of the model fitting is given in Algorithm~\ref{algo:ModelFittingFunction}.

\subsection{Model selection} 

\setlength{\intextsep}{0pt}
\IncMargin{1em}
\begin{algorithm2e}
\caption{ModelSelectionFunction}
\label{algo:ModelSelectionFunction}
\SetKwInOut{Input}{input}\SetKwInOut{Output}{output}
\Input{ Position-time-series sets  $\mathcal{P}_\text{train}$,$\mathcal{P}_\text{val}$, and Direction-time-series  $\mathcal{S}_\text{train}$,$\mathcal{S}_\text{val}$. }
\Output{Support vectors: $\vec{w}^*_1,\dots,\vec{w}^*_n$. }
\begin{small}
\SetAlgoLined
\nl Setting $\mathcal{K}=\{\vec{\kappa}_k\}$\;
\nl \For{ Each $\vec{\kappa}_k$ in $\mathcal{K}$}
{
Inferring models, $(\vec{w}^*_{1,k}, \vec{\kappa}_k),\dots,(\vec{w}^*_{n,k}, \vec{\kappa}_k)$, from  $\mathcal{P}_\text{train}$,$\mathcal{S}_\text{train}$ using Algorithm~\ref{algo:ModelFittingFunction} \;

}
\nl Finding the optimal support vectors, $\vec{w}^*_1,\dots,\vec{w}^*_n$, from  $\mathcal{P}_\text{val}$,$\mathcal{S}_\text{val}$ using Eq.~\ref{eq:OPTsup}\;
\nl Return $\vec{w}^*_1,\dots,\vec{w}^*_n$\;
\end{small}
\end{algorithm2e}\DecMargin{1em}

First, we vary $\vec{\kappa}_k$ and  find a model $(\vec{w}^*_{i,k}, \vec{\kappa}_k)$ for each agent $i$ from $\mathcal{P}_\text{train},\: \mathcal{S}_\text{train}$. As the result, we have  a set of models $\Phi_i=\{(\vec{w}^*_{i,k}, \vec{\kappa}_{i,k})\}$ that is now used to perform model selection for an agent $i$. We concatenate all time series in $\mathbb{P}_\text{val}$ to be a single time series $\mathcal{P}_\text{val}$ and also concatenate $\mathbb{S}_\text{val}$ to be $\mathcal{S}_\text{val}$.  Finally, for each agent $i$, we find the optimal support vector $\vec{w}^*_i$ using the equation below:

\begin{equation}
   \vec{w}^*_i = \argmin_{ (\vec{w}^*_{i,k}, \vec{\kappa}_k) \in \Phi_i} risk(\mathcal{P}_\text{val},\mathcal{S}_\text{val},\vec{w}^*_k,i). 
   \label{eq:OPTsup}
\end{equation}

After we get the support vector $\vec{w}^*_i=[w^*_{i,1},w^*_{i,2},w^*_{i,3}]^\text{T}$, if $w^*_{i,1}$ is the highest support in $\vec{w}^*_i$, then we say that agent $i$ uses the HM strategy function to coordinate with its group. If $w^*_{i,2}$ has the highest support, then we say that $i$ follows its physical neighbors to coordinate with the group. If  $w^*_{i,3}$ has the highest support, then $i$ just follows its own linear path independently, and if $i$'s path is the target path $S_w$ then $i$ is an informed agent.  Lastly, if at least two of $w^*_{i,1}$, $w^*_{i,2}$, $w^*_{i,3}$ show significantly high weights, then we conclude that $i$ uses a mixed strategy. The pseudo code of the model selection is given in Algorithm~\ref{algo:ModelSelectionFunction}.
\section{Experimental setup}
We test our approach both on simulated and on biological data.

\subsection{Simulations}
\label{sec:simdata}
We generated a set of time series of 2-dimensional positions $\mathcal{P}$  by four different sets of strategy functions: $\mathcal{H}_\text{HM}$,$\mathcal{H}_\text{LRA}$,$\mathcal{H}_\text{HM\&LRA}$, and $\mathcal{H}_\text{MIX}$. A dataset of $\mathcal{H}_\text{HM\&LRA}$ is the dataset that contains some agents that use $\mathcal{H}_\text{HM}$ while some other agents use $\mathcal{H}_\text{LRA}$. In $\mathcal{H}_\text{HM\&LRA}$, when agents use either $\mathcal{H}_\text{HM}$ or $\mathcal{H}_\text{LRA}$, they use the same strategy for all time steps. In contrast, a dataset of $\mathcal{H}_\text{MIX}$ has agents that alternately choose to use between $\mathcal{H}_\text{HM}$ and $\mathcal{H}_\text{LRA}$ w.r.t. some probability. Hence, agents within $\mathcal{H}_\text{MIX}$ do not use the same strategy for all time steps.  We define a set of state-time-series $\mathcal{S}=\{S_i\}$ as a set of time series of directional degrees of $\mathcal{P}=\{P_i\}$, where $P_i=(P^0_i,\dots,P^T_i)$ is a time series of positions of an agent $i$; $S_i=(S^0_i,\dots,S^T_i)$ is time series of directional degrees of an agent $i$ derived from a position time series $P_i$; and $S^t_i \in (-180,180]$ is a degree angle between a direction vector $\vec{v}^t_i=P^t_i-P^{t-1}_i$ and $x$-axis direction vector $[1,0]^\text{T}$. Note that we need to be careful also of the distance between any $S^t_i$ and $S^t_j$ since $-179^\circ$ and $180^\circ$ have a difference of 359 degrees but very similar implications for coordination.

\begin{equation}
  \text{DIST}_\text{dir}(S^t_i,S^t_j)=\left\{
  \begin{array}{@{}ll@{}}
	|S^t_i-S^t_j|, & \text{if }|S^t_i-S^t_j|\leq 180 \\
   360-|S^t_i-S^t_j|, & \text{otherwise}
  \end{array}\right.
\label{eq:DegreeDifFunc}
\end{equation}
 Where $\text{DIST}_\text{dir}(S^t_i,S^t_j) \in [0,180]$. We have only $\mathcal{P}$ as an input for our framework since we can create $\mathcal{S}$ from $\mathcal{P}$. In all simulated datasets, there are 20 agents and ID(1) is the informed agent. ID(1) creates the target path by uniformly and randomly choosing a fixed direction $S^0_w$ as the initial state, then continuing to move in the direction of $S^0_w$ until the end of coordination.
 
\subsubsection{Hierarchical Model Dynamic System}
 In this system, we used a set of strategy function $\mathcal{H}_\text{HM}=\{f_i\}$ to generate $\mathcal{P}_\text{HM}$ where all $f_i$ is $f_\text{HM}$ (Eq.~\ref{eq:HMStrFunc}).  The parameter in this model is the following probability $\rho \in [0,1]$. We set the probability weight of all edges in a probabilistic following network $\mathcal{G}$ equal to $\rho$. The communication network $G^t$ generated by $\mathcal{G}$ is  used to update the directional state $S^t_i$ by the strategy function $f'_\text{HM}$. All 19 agents always follow only ID(1) with the probability $\rho$. In other words, all nodes have edges to ID(1) with the weight $\rho$ in $\mathcal{G}$.  For each coordination event, it lasts 400 time steps. So, $\mathcal{P}_\text{HM}=\{P_1,\dots,P_{20}\}$ s.t.  $P_i=(P^0_i,\dots,P^{400}_i)$. We vary $\rho \in \{0.25,0.50,0.75,1.00\}$. For each $\rho$, we generated 100 coordination events. In total, we have 400 datasets.
 
\subsubsection{Local Reversible agreement system}
 
 We created 100 other datasets for the LRA system. We used a set of strategy function $\mathcal{H}_\text{LRA}=\{f_i\}$ to generate $\mathcal{P}_\text{LRA}$ where all $f_i$ is $f_\text{LRA}$ (Eq.~\ref{eq:LRAStrFunc}).  For each dataset, it contains a set of time series of positions from 20 agents,  $\mathcal{P}_\text{LRA}=\{P_1,\dots,P_{20}\}$, where $P_i=(P^0_i,\dots,P^{400}_i)$. All agents updates their state $S^t_i$ corresponding to their local neighbors' states using a strategy function $f_{\text{LRA}}$.
 
\subsubsection{Hierarchical and Local Reversible agreement system}
 
  We created  100 other datasets of HM \& LRA coordination events by $\mathcal{H}_\text{HM\&LRA}$. We use this simulation to represent the group that has a coordination interval even if some agents use the HM strategy function but others use the LRA strategy function. For each dataset, it contains a set of position time series from 20 agents, $\mathcal{P}_\text{HM \& LRA}=\{P_1,\dots,P_{20}\}$, where $P_i=(P^0_i,\dots,P^{400}_i)$. The ID(1) is the informed agent. Agents who possess ID(2-10) use $f_{\text{HM}}$ with  $\rho = 1.00$. The rest of ID(11-20) agents use $f_{\text{LRA}}$.
  
\subsubsection{Mixed strategy system}
Lastly, we created 100 other datasets of mixed strategy of coordination events.  For each dataset, it contains a set of 20-agent position time series $\mathcal{P}_\text{MIX}=\{P_1,\dots,P_{20}\}$ where $P_i=(P^0_i,\dots,P^{400}_i)$ is time series of positions of agent $i$. The ID(1) is the informed agent. Other agents updates their state $S^t_i$ corresponding to both $f_{\text{HM}}$ with probability $0.5$ and $f_{\text{LRA}}$ with probability $0.5$.

 \subsubsection{Evaluation}
 \label{sec:eval}
 In this section, we evaluate the task of inference of the latent strategies given that we know the set of possible strategies. 
For each model, we performed 10-fold cross validation to evaluate the performance. For each round of cross validation, we have 100 datasets that can be separated into  45 training datasets, 45 validation datasets, and 10 testing datasets. We concatenated all time series in $\mathbb{P}_\text{test}$ to be a single time series $\mathcal{P}_\text{test}$ and also concatenate $\mathbb{S}_\text{test}$ to be $\mathcal{S}_\text{test}$. Then we use $\mathcal{P}_\text{test},\mathcal{S}_\text{test}$ to evaluate the direction prediction performance. We compare four strategy functions: $f_{\text{HM}}$, $f_{\text{LRA}}$, $f_{\text{AR}}$, and $f_{\text{OPT}}$, which is our framework optimal strategy function derived from Eq.~\ref{eq:Mix3Str} and \ref{eq:OPTsup}. We use the risk function that has Eq.~\ref{eq:DegreeDifFunc} as a loss function to evaluate the model performance. 
 
 \begin{equation}
 risk(\mathcal{P},\mathcal{S},f,i) = \frac{1}{T}\sum_{t=1}^T \text{DIST}_\text{dir} (S^t_i , f(S^{t-1},P^t,i))
 \label{eq:RiskDir}
 \end{equation}
 
 For each agent $i$, the best fitting model is the model that minimizes the risk function $risk(\mathcal{P}_\text{test},\mathcal{S}_\text{test},f,i)$ in Eq.~\ref{eq:RiskDir}.
\begin{equation}
   f^*_i= \argmin_{ f \in \{f_\text{HM},f_\text{LRA},f_\text{AR},f_\text{OPT}\}} risk(\mathcal{P}_\text{test},\mathcal{S}_\text{test},f,i) 
\end{equation}
 
 For each strategy function $f$, we report the distribution of loss values of direction prediction from all agents in each time step as well as the group's average optimal weight $\vec{w}^*_i$ from Eq.~\ref{eq:OPTsup}. If the framework performs well, then it should give the highest weight for the model that generated the dataset. 
 
 \subsection{Baboon behavioral experiment}
 
The dataset is the recording of GPS collars of an olive baboon (\emph{Papio anubis}) troop in the wild in Mpala Research Centre, Kenya~\cite{strandburg2015shared}. The GPS was recorded at 1 Hz from 7am until 7pm. 
The dataset consists of 16 individuals whose GPS trackers remained functional for 10 days.  A 2-dimensional trajectory of latitude and longitude for each individual has a length of 419,095 time steps. We extracted coordination events by FLICA varying the network density threshold at 25th, 50th, 75th, and 99th percentile and the time window at 240 time steps to infer coordination events and $60$ time steps to infer a dynamic following network. We used the 10-fold cross validation to report the results. For each round of cross validation, it has 45\% of training, 45\% of validation, and 10\% of testing coordination events. The remainder of the evaluation follows the description in the \textbf{Evaluation Section}. 
We use this experiment to demonstrate the ability of our framework to predict the next movement direction of agents even when the optimal strategy is unknown. The result can be used to generate (and test) hypotheses about the latent coordination strategies in collective movement data. 
 
 \subsection{Fish behavioral experiment}
We used the time series of  golden shiners (\emph{Notemigonus crysoleucas})  fish positions from \cite{strandburg2013visual}. The dataset was initially created to study information propagation via the fish visual fields~\cite{strandburg2013visual}. In total, there were 24 trails of fish position time series $\mathbb{P}=\{\mathcal{P}_1,\dots,\mathcal{P}_{24}\}$ in 2-dimensional space. For each $\mathcal{P}_k$, it consists of 70 fish, with 10 trained fish who are considered to be informed agents in our setting. On average, the time series in  $\mathcal{P}_k$ has its length around 600 time steps. The trained fish moved toward the feeding site (the target path) and the group follows them.  Due to the lack of information of identity for each individual in the different trails, we cannot train our framework in this dataset. Hence, we use fish data to demonstrate how to apply our framework to compare performance of each candidate strategy on direction prediction.

We compared the Informed strategy function $f_{\text{TF}}$ against $f_{\text{LRA}}$ in Eq.~\ref{eq:LRAStrFunc}. For each time step, $f_{\text{TF}}$ updates $S^t_i$ for any agent $i$ from the average of $S^t_j$ where $j$ is a trained fish. We use the risk function in Eq.~\ref{eq:RiskDir} to compare the performance among these strategy functions. For each strategy function $f$, we report the distribution of all agents' direction prediction error in each time step from $\text{DIST}_\text{dir} (S^t_i , f(S^{t-1},P^t,i)$.

\subsection{Comparison with the state of the art method}
Our method is the first approach to infer individual-level strategies that lead to group-level coordination. Thus, we compare our framework with the-state-of-the-art method, FLICA~\cite{FLICAtkdd}, for the task of leadership model classification. Since FLICA cannot infer the individual-level strategy, we evaluate both frameworks at the group-level classification task. We use simulated datasets from Section~\ref{sec:simdata}. 
Each set of time series has its label from one of the four models: HM, LRA, HM \& LRA, and Mix strategy model. FLICA maps each set of time series to the leadership ranking and convex hull features. In our framework, we use the median of $ \vec{w}^*_i$ (Eq.~\ref{eq:OPTsup}) to represent the feature vector of each dataset. We use 10-fold cross validation on Random Forests \cite{ho1998random} to report the evaluation results for both frameworks. To evaluate results, we define true positive (TP), false positive (FP), and false negative (FN) cases as follows. TP is the case when the predicted and ground-truth models of the dataset are the same. FP of model X is the case that a dataset that is not generated by model X is predicted as a model X's dataset. 
FN of model X is the case when a dataset of model X is predicted to be a model that is not X. We use TP, FP, and FN to calculate precision, recall, and F1 score to report results.
\section{Results}
\subsection{Simulations}

\begin{table}[th!]
\centering
\caption{The result of predicting the direction of movement via 10-fold cross validation. We compared the result of our framework (OPT) against the base-line pure strategies: HM, LRA, and AR (auto regressive strategy).
(*indicates the STD $\geq 20^\circ$)}
\label{tb:SimBaboonRes}
\begin{footnotesize}
\begin{tabular}{c|c|c|c|c|}
\cline{2-5}
\multicolumn{1}{l|}{}                                   & \multicolumn{4}{c|}{\begin{tabular}[c]{@{}c@{}}Average degree prediction error \\ {[}$0^\circ,180^\circ${]}\end{tabular}} \\ \hline
\multicolumn{1}{|c|}{Datasets\textbackslash Strategies} & OPT                             & HM                             & LRA                            & AR                    \\ \hline
\multicolumn{1}{|c|}{HM }                         & \textbf{12.40}                  & \textbf{12.98}                 & 20.49                          & 30.21*                \\ \hline
\multicolumn{1}{|c|}{LRA }                        & \textbf{7.77*}                  & 16.93*                         & \textbf{7.76*}                 & 13.78*                \\ \hline
\multicolumn{1}{|c|}{HM \& LRA }                  & \textbf{4.42}                   & 13.39*                         & 13.59                          & 23.87*                \\ \hline
\multicolumn{1}{|c|}{Mixed Str.}             & \textbf{29.33*}                 & 30.53*                         & 31.69*                         & 46.28*                \\ \hline
\multicolumn{1}{|c|}{Random}             & \textbf{89.74*}                 & 90.11*                         & \textbf{89.70*}                         & 90.21*                \\ \hline
\multicolumn{1}{|c|}{Baboon}             & \textbf{53.16*}                & \textbf{53.16*}                         & 72.36*                         & 85.84*                \\ \hline
\end{tabular}
\end{footnotesize}
\end{table}

The results of inferring the coordination strategy in simulated datasets are shown in Table~\ref{tb:SimBaboonRes}. A row represents the results from datasets generated by a specific model. A column represents a strategy prediction error measured in degree units $[0^\circ,180^\circ]$. OPT is the optimal strategy function trained by our framework. HM is Eq.~\ref{eq:HMstrFunc2}. LRA is Eq.~\ref{eq:LRAStrFunc}. AR is the auto regressive strategy function that chooses the current direction $t$ based on the previous five time steps from the same agent. We use AR as the baseline. In all datasets, our framework (OPT column) has roughly smallest error among all other strategies. For the first two rows of HM and LRA datasets, OPT has almost the same performance as the strategies used to generate the data  (HM row/column and LRA row/column).  For HM \& LRA datasets in the third row, each individual might use either HM or LRA strategy. Hence, using the homogeneous strategy to predict directions for all agents results in larger error (HM and LRA column). On the contrary, our framework can detect which individual uses which strategy. Hence, OPT performed better than all pure strategies. Similarly, for the mixed strategy datasets (Mixed Str. row), each individual might use either HM or LRA as its strategy with the probability 0.5. Since our framework can infer mixed strategies,  it performed better than using any pure strategy.   Lastly, we reported the results of the direction prediction from the 100 datasets of time series generated from $n$ agents moving uniformly and randomly in any direction (Random row). The result shows that all strategies included in our framework produced the same bad result with the loss value at $90^\circ$ degree. This shows that our framework does not find an artifact model where none exists.

\begin{table}[th]
\caption{The average optimal support vector $\vec{w}$ of all agents from 10-fold cross validation, inferred by our framework from simulated and the Baboon datasets. }
{%
\begin{center}
\begin{footnotesize}
\begin{tabular}{c|c|c|c|}
\cline{2-4}
\multicolumn{1}{l|}{}                                                                        & \multicolumn{3}{c|}{Average Support $\vec{w}$ (predict/actual)} \\ \hline
\multicolumn{1}{|c|}{Datasets}                                                               & $w_1$:HM             & $w_2$:LRA            & $w_3$:AR  \\ \hline
\multicolumn{1}{|c|}{HM}                                                              & \textbf{0.85/1.00}   & 0.12/0.00            & 0.03/0.00         \\ \hline
\multicolumn{1}{|c|}{LRA}                                                             & 0.02/0.00            & \textbf{0.98/1.00}   & 0.00/0.00         \\ \hline
\multicolumn{1}{|c|}{\begin{tabular}[c]{@{}c@{}}HM \& LRA\\ (HM part)\end{tabular}}   & \textbf{1.00/1.00}   & 0.00/0.00            & 0.00/0.00         \\ \hline
\multicolumn{1}{|c|}{\begin{tabular}[c]{@{}c@{}}HM \& LRA \\ (LRA part)\end{tabular}} & 0.00/0.00            & \textbf{1.00/1.00}   & 0.00/0.00         \\ \hline
\multicolumn{1}{|c|}{Mixed Strategy}                                                  & \textbf{0.48/0.50}   & \textbf{0.48/0.50}   & 0.04/0.00         \\ \hline
\multicolumn{1}{|c|}{Random}                                                          & {0.09/0.00}   & 0.86/0.00            & 0.05/0.00         \\ \hline
\multicolumn{1}{|c|}{Baboon}                                                         & 1.00/NA              & 0.00/NA              & 0.00/NA            \\ \hline
\end{tabular}
\end{footnotesize}
\end{center}
}
\label{tb:SimeWopt}
\end{table}

Table~\ref{tb:SimeWopt} shows the support vectors for each strategy corresponding to the  datasets in Table~\ref{tb:SimBaboonRes} in the OPT column. For each element in the table, the first number is the predicted support from our framework and the second is the actual support that we used to create the datasets. For example, in the first element of HM row, 0.85/1.00 means we used HM strategy to create HM datasets and the framework inferred the HM support  in these datasets as 0.85. Overall, our framework correctly inferred the support vectors of all non-random datasets, while avoiding overfitting.

\subsection{Baboon behavioral experiment}

We varied the threshold of the following network density to infer coordination events in the baboon dataset. We report the average result from all the thresholds. The last row of Table~\ref{tb:SimBaboonRes} shows the result of the direction prediction of baboons, using different coordination strategies. The OPT coordination strategy, as derived by our framework, is in the last row in Table~\ref{tb:SimeWopt}. According to the result, OPT used HM as the pure strategy. The errors of HM and OPT strategies suggest that baboons may have a slight preference to follow a pre-determined individual or a set of individuals, rather than their neighbors in the position space. This is consistent with the biological understanding of the baboon social behavior~\cite{farine2016both}. However, the more accurate strategy should be investigated and biologically verified.

\subsection{Fish behavioral experiment}

\begin{table}[th!]
\centering
\caption{Comparison between LRA and Informed strategies to predict directions of 24 trails of fish\protect\footnotemark.}  
\label{tb:FishRes}
\begin{footnotesize}
\begin{tabular}{c|c|c|}
\cline{2-3}
\multicolumn{1}{l|}{}                   & \multicolumn{2}{l|}{Error of degree prediction {[}$0^\circ,180^\circ${]}} \\ \hline
\multicolumn{1}{|c|}{Strategies}        & Mean                                & STD                                 \\ \hline
\multicolumn{1}{|c|}{LRA}               & 41.51                      & 45.11                               \\ \hline
\multicolumn{1}{|c|}{Informed Strategy} & 54.46                               & 47.68                      \\ \hline
\end{tabular}
\end{footnotesize}
\end{table}

The results of the direction prediction in fish datasets, for LRA and Informed strategies, are in Table~\ref{tb:FishRes}. The LRA performed better than the Informed strategy, indicating that fish follow their immediate neighbors in space. This result is supported by the work in \cite{strandburg2013visual,Katz18720} and many others,  showing that fish do not directly know who leads the group but follow their neighbors.\footnotetext{The reason that fish datasets have their own table while other datasets are in another table is because of the following reason. To use 10-fold cross validation, we have to be able to learn each individual strategy from one set of coordination events (training datasets) to predict the strategy of the same individual in another set of coordination events (validation datasets). In fish datasets, there are 24 fish coordination events. However, fish datasets lack of individual identities. Precisely, two individuals with the same ID from two different fish-coordination events might not be the same individual. In contrast, two individuals with the same ID from two different coordination events are always the same individual in both simulation and baboon datasets.  Hence, we cannot use 10-fold cross validation procedure on fish datasets the same way as we did on baboon and simulation datasets.}

\subsection{Comparison with the state of the art method}

\begin{table}[]
\centering
\caption{The results of model classification of FLICA and the proposed framework via 10-fold cross validation. We use Random Forest for classification.}
\label{tb:modelClsRes}
\begin{footnotesize}
\begin{tabular}{c|c|c|c|c|c|c|}
\cline{2-7}
\textbf{}                              & \multicolumn{3}{c|}{\textbf{FLICA}}          & \multicolumn{3}{c|}{\textbf{Proposed Method}} \\ \hline
\multicolumn{1}{|c|}{\textbf{Classes}} & \textbf{Precision} & \textbf{Recall} & \textbf{F1 score} & \textbf{Precision}   & \textbf{Recall}  & \textbf{F1 score}  \\ \hline
\multicolumn{1}{|c|}{\textbf{HM}}      & 1              & 0.75          & 0.86        & 1                & 1              & 1            \\ \hline
\multicolumn{1}{|c|}{\textbf{LRA}}     & 0.8            & 1             & 0.89        & 1                & 1              & 1            \\ \hline
\multicolumn{1}{|c|}{\textbf{HM \& LRA}}  & 0.94           & 1             & 0.97        & 0.98             & 1              & 0.99         \\ \hline
\multicolumn{1}{|c|}{\textbf{Mixed Str.}}     & 0.90           & 0.94          & 0.92        & 1                & 0.98           & 0.99         \\ \hline
\multicolumn{1}{|c|}{\textbf{Random}}    & 1              & 0.9           & 0.95        & 1                & 1              & 1            \\ \hline
\end{tabular}
\end{footnotesize}
\end{table}

The result of model classification using FLICA as well as the proposed framework is in Table~\ref{tb:modelClsRes}. In all datasets, the proposed framework performed better than FLICA. This indicates that the group-level features that FLICA provides for classification are not sufficiently informative to be used to categorize complicated datasets where individuals may use a heterogeneous set of strategies (e.g. HM \& LRA). 

For the baboon and fish datasets, since there is no ground truth available regarding classes of strategies, we can only discuss the results of both datasets from FLICA and the new insight from our proposed framework here. The FLICA result of classification in~\cite{FLICAtkdd} stated that baboons used a linear threshold model to form coordination; there is no association of orders of movement velocity and position of individuals vs. ranking of movement initiation. In other words, initiators do not necessary move first or in a front of a group. In this work, Table~\ref{tb:SimBaboonRes} suggests that there is a hierarchy among baboons; baboons trend to follow the directions of specific individuals. This result is consistent with the result in~\cite{Amornbunchornvej2019} that performed analysis on the same baboon dataset, which showed that there are several pairs of baboons that follow each other with high supports in various situations. For the fish datasets, the result of FLICA framework~\cite{FLICAtkdd} suggests that trained fish truly initiated coordination movement. In this work, Table~\ref{tb:FishRes} suggests that schools of fish used LRA strategy; individuals in school of fish do not follow trained fish directly, but they follow their neighbors. 
\section{Limitations and future work} 
Even though our framework can address \cmip, there are several limitations in our work. First, our framework considers only three types of strategies: HM, LRA, and AR. Hence, our framework can distinguish only whether each agent follows its neighbors (LRA), specific individuals (HM), or itself (AR). Second, our framework assumes that there is only one target path that a group tries to form coordination with. In the case of multiple target paths, we need another framework (e.g. mFLICA~\cite{amornbunchornvej2018framework}) to segment each faction of coordination events that has a different target path, then applying our proposed framework to infer individual strategies. Third, our framework considers each agent as a point without considering environmental factors (e.g. obstacles, gap between agents before collision, constrains of movement). Lastly, our framework assumes that a state of an agent at time $t$ is affected by previous states of other agents and/or itself at time $t-1$.  These limitations enable opportunities for future research.
\section{Conclusions}
In this paper, we formalized a new computational problem, \cmip. Given a set of candidate strategies and a set of time series of coordinated movement as inputs, our goal is to infer the original strategy that each individual used to achieve the group coordination. We showed that a strategy that has the convergence property can guarantee that the group reaches coordination. We provide the first approach to infer the set of strategies that each individual uses to achieve movement coordination at the group level. We evaluated and demonstrated our framework performance in simulated datasets as well as two biological datasets: baboon and fish. Our framework was able to infer the original set of strategy functions that generated each simulated dataset. The results show that our approach is highly accurate in inferring the correct strategy in simulated datasets even in complicated mixed strategy settings.  Moreover, our framework performed classification of group-level coordination models from time series better than FLICA framework, which is the-state-of-the-art approach for the task. Animal data experiments show that fishes, unsurprisingly, follow their neighbors, while baboons have a preference to follow specific individuals.  
Although we used the specific setting of focusing on the direction of movement as the definition of an agent's state and used three exemplar candidate strategy, our methodology easily generalizes to arbitrary time series data on Euclidean space, beyond movement data, and other candidate strategies. While for the fairness of comparison with the biological datasets we used simulated data of 20 individuals, it is clear that there are no inherent limitations in the approach to scale to much larger datasets. The only barrier is the availability of data. The code and datasets that we used in this paper can be found at ~\cite{ShareSourcecode}.

\balance
\bibliographystyle{ACM-Reference-Format}

\balance

\end{document}